\newtheorem{theorem}{Theorem}
\newtheorem{assumption}{Assumption}
\titlespacing{\section}{0pt}{4pt plus 4pt minus 2pt}{4pt plus 4pt minus 2pt}
\titlespacing{\subsection}{0pt}{2pt plus 0pt minus 2pt}{2pt plus 0pt minus 2pt}
\title{Continuous Temporal Domain Generalization}
\author[1,4]{\textbf{Zekun Cai}}
\author[2]{\textbf{Guangji Bai}}
\author[1]{\textbf{Renhe Jiang}\textsuperscript{\Letter}}
\author[3,4]{\textbf{Xuan Song}}
\author[2]{\textbf{Liang Zhao}}
\affil[1]{The University of Tokyo, Tokyo, Japan}
\affil[2]{Emory University, Atlanta, GA, USA}
\affil[3]{Jilin University, Changchun, China}
\affil[4]{Southern University of Science and Technology, Shenzhen, China}
\affil[1]{\texttt{\{caizekun, jiangrh, songxuan\}@csis.u-tokyo.ac.jp}}
\affil[2]{\texttt{\{guangji.bai, liang.zhao\}@emory.edu}}
\begin{document}

\maketitle

\begin{abstract}
    Temporal Domain Generalization (TDG) addresses the challenge of training predictive models under temporally varying data distributions. Traditional TDG approaches typically focus on domain data collected at fixed, discrete time intervals, which limits their capability to capture the inherent dynamics within continuous-evolving and irregularly-observed temporal domains. To overcome this, this work formalizes the concept of Continuous Temporal Domain Generalization (CTDG), where domain data are derived from continuous times and are collected at arbitrary times. CTDG tackles critical challenges including: 1) Characterizing the continuous dynamics of both data and models, 2) Learning complex high-dimensional nonlinear dynamics, and 3) Optimizing and controlling the generalization across continuous temporal domains. To address them, we propose a Koopman operator-driven continuous temporal domain generalization (Koodos) framework.  We formulate the problem within a continuous dynamic system and leverage the Koopman theory to learn the underlying dynamics; the framework is further enhanced with a comprehensive optimization strategy equipped with analysis and control driven by prior knowledge of the dynamics patterns. Extensive experiments demonstrate the effectiveness and efficiency of our approach. The code can be found at: \url{https://github.com/Zekun-Cai/Koodos}.
\end{abstract}

\section{Introduction}
In practice, the distribution of training data often differs from that of test data, leading to a failure in generalizing models outside their training environments. Domain Generalization (DG) \cite{wang2022generalizing,tremblay2018training,huang2020self,du2020learning,qiao2020learning} is a machine learning strategy designed to learn a generalized model that performs well on the unseen target domain. The task becomes particularly pronounced in dynamic environments where the statistical properties of the target domains change over time \cite{gama2014survey,lu2018learning,bai2023saliency}, prompting the development of Temporal Domain Generalization (TDG) \cite{li2020sequential,nasery2021training,qin2022generalizing,bai2023temporal,yong2023continuous,zeng2024generalizing}. TDG recognizes that domain shifts are temporally correlated. It extends DG approaches by modeling domains as a sequence rather than as categorical entities, making it especially beneficial in fields where data is inherently time-varying.

Existing works in TDG typically concentrate on the discrete temporal domain, where domains are defined by distinct "points in time" with fixed time intervals, such as second-by-second data (Rot 2-Moons \cite{wang2020continuously}) and annual data (Yearbook \cite{yao2022wild}). In this framework, data bounded in a time interval are considered as a detached domain, and TDG approaches primarily employ probabilistic models to predict domain evolutions. For example, LSSAE \cite{qin2022generalizing} employs a probabilistic generative model to analyze latent structures within domains; DRAIN \cite{bai2023temporal} builds a Bayesian framework to predict future model parameters, and TKNets \cite{zeng2024generalizing} constructs domain transition matrix derived from the data. 

However, in practice, data may not always occur or be observed at discrete, regularly spaced time points. Instead, events and observations unfold irregularly and unpredictably in the time dimension, leading to temporal domains distributed irregularly and sparsely over continuous time. Formally, this paper introduces a problem termed Continuous Temporal Domain Generalization (CTDG), where both seen and unseen tasks reside in different domains at continuous, irregular time points. Fig. \ref{fig:conceptual} illustrates an example of public opinion prediction after political events via Twitter data. Unlike the assumption in traditional TDG that the temporal regularly domains, the data are collected for the times near political events that may occur in arbitrary times. In the meanwhile, the domain data evolve constantly and continuously over time, e.g., active users increase, new friendships are formed, and age and gender distribution changes. Correspondingly, the ideal classifier should gradually change with the domain at random moments to counter the data distribution change over time. Finally, we are concerned with the state of the predictive model at any moment in the future. CTDG is ubiquitous in other fields. For example, in disaster management, relevant data are collected during and following disasters, which may occur at any time throughout the year. In healthcare, critical information about diagnosis and treatment is typically only documented during episodes of care rather than evenly throughout the lifetime. Hence, the CTDG task necessitates the characterization of the continuous dynamics of irregular time-distributed domains, which cannot be handled by existing TDG methods designed for discrete-dynamics and fixed-interval times.

\begin{figure*}[t]
	\centering
	\includegraphics[width=0.98\textwidth]{./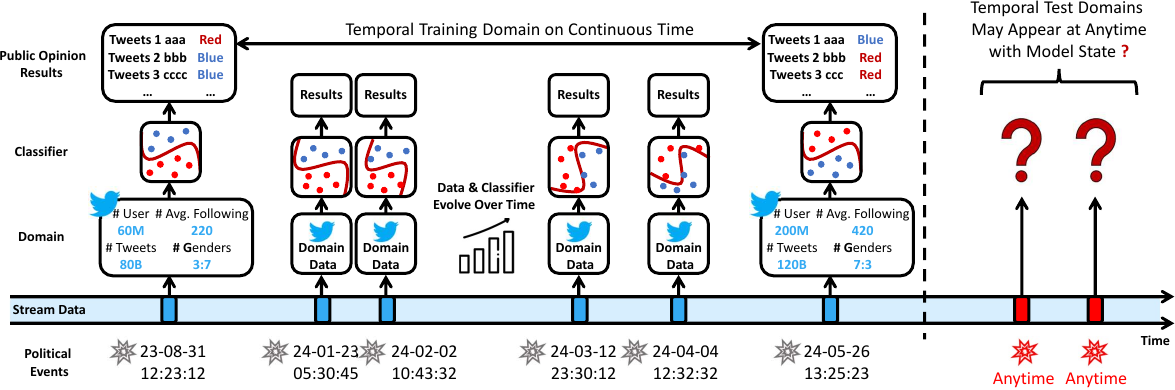}
	\caption{An example of continuous temporal domain generalization. Consider training classification models for public opinion prediction via tweets, where the training domains are only available at specific political events (e.g., presidential debates), we wish to generalize the model to any future based on the underlying data distribution drift within the time-irregularly distributed training domains.}
	\label{fig:conceptual}
\end{figure*}

Despite the importance of CTDG, it is still a highly open research area that is not well explored because of several critical hurdles in: \textbf{1) Characterizing data dynamics and their impact on model dynamics.} The irregular times of the temporal domains require us to characterize the continuous dynamics of the data and, hence, the model dynamics ultimately. However, the continuous-time data dynamics are unknown and need to be learned across arbitrary time points. Furthermore, it is imperative yet challenging to know how the model evolves according to the data dynamics in continuous times. Therefore, we don't have a direct observation of the data dynamics and the model dynamics we want to learn, which prevents us from existing continuous time modeling techniques. \textbf{2) Learning the underlying dynamics of over-parametrized models.} Deep neural networks (e.g., Multi-Layer Perceptron and Convolutional Neural Network) are highly nonlinear and over-parametrized, and hence, the evolutionary dynamics of model states over continuous time are high-dimensional and nonlinear. Consequently, the principle dynamics reside in a great number of latent dimensions. Properly representing and mapping these dynamics into a learnable space remains a challenge. \textbf{3) Jointly optimizing the model and its dynamics under possible inductive bias.} The model learning for individual domains will be entangled with the learning of the continuous dynamics across these models. Furthermore, in many situations, we may have some high-level prior knowledge about the dynamics, such as whether there are convergent, divergent, or periodic patterns. It is an important yet open topic to embed them into the CTDG problem-solving.

To address all the challenges, we propose the \underline{Koo}pman operator-\underline{d}riven c\underline{o}ntinuou\underline{s} temporal domain generalization framework (Koodos). Specifically, the Koodos framework articulates the evolutionary continuity of the predictive models in CTDG, then leverages a continuous dynamic approach to model its smooth evolution over time. Koodos further simplifies the nonlinear model system by projecting them into a linearized space via the Koopman Theory. Finally, Koodos provides an interface that reveals the internal model dynamic characteristics, as well as incorporates prior knowledge and constraints directly into the joint learning process.

\section{Related works}
\textbf{Domain Generalization (DG) and Domain Adaptation (DA).}  DG approaches attempt to learn a model from multiple source domains that generalize well to an unseen domain \cite{muandet2013domain,motiian2017unified,li2017deeper,balaji2018metareg,dou2019domain,yu2022deep}. Existing DG methods can be classified into three strategies \cite{wang2022generalizing}: (1) Data manipulation techniques, such as data augmentation \cite{tobin2017domain,tremblay2018training,zeng2024generalizing} and data generation \cite{liu2018unified,qiao2020learning}; (2) Representation learning focuses on extracting domain-invariant features \cite{ganin2016domain,gong2019dlow} and disentangling domain-shared from domain-specific features \cite{li2017domain}; (3) Learning strategies encompass ensemble learning \cite{mancini2018best}, meta-learning \cite{li2018learning,du2020learning,cai2023memda}, and gradient-based approaches \cite{huang2020self}. Unlike DG, DA methods require simultaneously accessing source and target domain data to facilitate alignment and adaptation \cite{wang2018deep,wilson2020survey,li2024comprehensive}. The technique includes domain-invariant learning \cite{ganin2016domain,tzeng2017adversarial,mancini2019adagraph,wang2020continuously,xu2023domain}, domain mapping \cite{bousmalis2017unsupervised,he2023domain,Du2021ADARNN,li2022ddg}, ensemble methods \cite{saito2017asymmetric}, and so on. \textit{Both DG and DA are limited to considering generalization across categorical domains, which treats domains as individuals but ignores the smooth evolution of them over time.}

\textbf{Temporal Domain Generalization (TDG).} TDG is an emerging field that extends traditional DG techniques to address challenges associated with time-varying data distributions. TDG decouples time from the domain and constructs domain sequences to capture its evolutionary relationships. S-MLDG \cite{li2020sequential} pioneers a sequential domain DG framework based on meta-learning. Gradient Interpolation (GI) \cite{nasery2021training} proposes to extrapolate the generalized model by supervising the first-order Taylor expansion of the learned function. LSSAE \cite{qin2022generalizing} deploys a probabilistic framework to explore the underlying structure in the latent space of predictive models. DRAIN \cite{bai2023temporal} constructs a recurrent neural network that dynamically generates model parameters to adapt to changing domains. TKNets \cite{zeng2024generalizing} minimize the divergence between forecasted and actual domain data distributions to capture temporal patterns.

Despite these studies, traditional TDG methods are limited to requiring domains presented in discrete time, which disrupts the inherent continuity of changes in data distribution, and the generalization can only be carried forward by very limited steps. \textit{No work treats time as a continuous variable, thereby failing to capture the full dynamics of evolving domains and generalize to any moment in the future.}

\textbf{Continuous Dynamical Systems (CDS).} CDS are fundamental in understanding how systems evolve without the constraints of discrete intervals. They are the study of the dynamics for systems defined by differential equations. The linear multistep method or the Runge-Kutta method can solve the Order Differential Equations (ODEs) \cite{griffiths2010numerical}. Distinguishing from traditional methods, Neural ODEs \cite{chen2018neural} represent a significant advancement in the field of CDS. It defines a hidden state as a solution to the ODEs initial-value problem, and parameterizes the derivatives of the hidden state using a neural network. The hidden state can then be evaluated at any desired time using a numerical ODEs solver. Many recent studies have proposed on which to learn differential equations from data \cite{rubanova2019latent,jia2019neural,kidger2020neural,massaroli2020dissecting}.

\section{Problem definition}
\textbf{Continuous Temporal Domain Generalization (CTDG):}
We address prediction tasks where the data distribution evolves over time. In predictive modeling, a domain $\mathcal{D}(t)$ is defined as a dataset collected at time $t$ consisting of instances $\{(x_i^{(t)}, y_i^{(t)}) \in \mathcal{X}(t) \times \mathcal{Y}(t)\}_{i=1}^{N(t)}$, where $x_i^{(t)}$, $y_i^{(t)}$ and $N(t)$ represent the feature, target and the number of instances at time $t$, and $\mathcal{X}(t)$, $\mathcal{Y}(t)$ denote the input feature space and label space at time $t$, respectively. We focus on the existence of gradual concept drift across continuous time, indicating that domain conditional probability distributions $P(Y(t)|X(t))$, with $X(t)$ and $Y(t)$ representing the random variables for features and targets at time $t$, change smoothly and seamlessly over continuous time like streams without abrupt jumps.

During training, we are provided with a sequence of observed domains $\{\mathcal{D}(t_1), \mathcal{D}(t_2), \ldots, \mathcal{D}(t_T)\}$ collected at arbitrary times $\mathcal{T} = \{t_1, t_2, \ldots, t_T\}$, where $t_i\in \mathbb{R}^+$ and $t_1<t_2<\ldots<t_T$. For each domain $\mathcal{D}(t)$ at time $t \in \mathcal{T}$, we learn the predictive model $g(\cdot;\theta(t)): \mathcal{X}(t) \rightarrow \mathcal{Y}(t)$, where $\theta(t)$ denotes the parameters of function $g$ at timestamp $t$. We model the dynamics across the parameters $\{\theta(t_1), \theta(t_2), \ldots, \theta(t_T)\}$, and finally predict the parameters $\theta(s)$ for the predictive model $g(\cdot;\theta(s)): \mathcal{X}(s) \rightarrow \mathcal{Y}(s)$ at any given time $s \notin \mathcal{T}$. For simplicity, in subsequent, we will use $\mathcal{D}_i$, $X_i$, $Y_i$, $\theta_i$ to represent $\mathcal{D}(t_i)$, $X(t_i)$, $Y(t_i)$, $\theta(t_i)$ at time $t_i$.

Unlike traditional temporal domain generalization approaches \cite{nasery2021training,bai2023temporal,xie2024evolving,zeng2024generalizing} that divide time into discrete intervals and require domain data be collected at fixed time steps, the CTDG problem treats time as continuous variable, allowing for the time points in training and test set to be any arbitrary positive real numbers, and requiring models to deal with the temporal domains as continuous streams. CTDG problem poses several unprecedented, substantial challenges in: 1) Characterizing the continuous dynamics of data and how that decides model dynamics; 2) Modeling the low-dimensional continuous dynamics of the predictive model, which are embedded in high-dimensional space due to over-parametrization; and 3) Optimizing and analyzing the continuous predictive model system, including the application of inductive biases to control its behavior.

\section{Methodology}
In this section, we outline our strategies to tackle the problem of CTDG by overcoming its substantial challenges. First, to learn the continuous drifts of data and models, we synchronize between the evolving domain and the model by establishing the continuity of the model parameters over time, which is ensured through the formulation of differential equations as elaborated in Section \ref{sec:model_data_dynamic}. To fill the gap between high-dimensional model parameters and low-dimensional model continuous dynamics, we simplify the representation of complex dynamics into a principled, well-characterized Koopman space as is detailed in Section \ref{sec:koopman}. To jointly learn the model and its dynamics under additional inductive bias, in Section \ref{sec:optimization}, we design a series of organized loss functions to form an efficient end-to-end optimization strategy. Overall, as shown in Fig. \ref{fig:model}(a), there are three dynamic flows in our system, which are the Data Flow, the Model Flow, and the Koopman Representation Flow. Through the proposed framework, we aim to ensure not only that the model responds to the statistical variations inherent in the data, but also that the characteristics of the three flows are consistent.

\begin{figure*}[t]
	\centering
	\includegraphics[width=0.95\textwidth]{./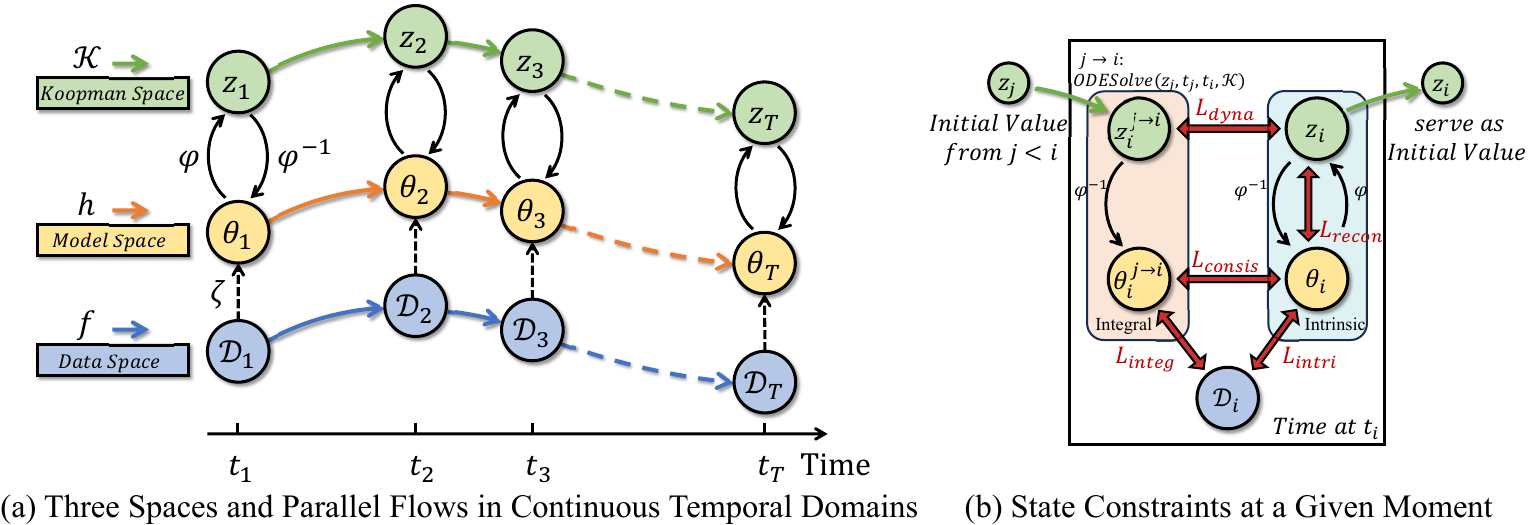}
	\caption{Macro-flows and micro-constraints in the proposed model framework.}
	\label{fig:model}
\end{figure*}

\subsection{Characterizing the continuous dynamics of the data and the model}
\label{sec:model_data_dynamic}
In this section, we explore the relationship between the evolving continuous domains and the corresponding dynamics of the predictive models. We demonstrate that the dynamics of temporal domains lead to the internal update of the predictive model continuously over time in Theorem \ref{the:param_continuity}. Following this, we develop a learnable dynamic system for the continuous state of the model by synchronizing the behaviors of the domain and model.

\begin{assumption}
    Consider the gradual concept drift within the continuous temporal domains. It is assumed that the conditional probability distribution $P_t(Y|X)$ changes continuously over time, and its dynamics are characterized by a function $f$, which models the variations in the distribution.
    \label{ass:data_continue}
\end{assumption}

\begin{theorem}(Continuous Evolution of Model Parameters)
    Given Assumption \ref{ass:data_continue}, it follows that the parameters $\theta_t$ of the predictive model $g(\cdot;\theta_t)$ also evolve continuously over time, and its dynamics are jointly determined by the current state of the model and the function $f$.
    \label{the:param_continuity}
\end{theorem}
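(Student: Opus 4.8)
The plan is to cast the time-indexed predictive model as the solution of a time-varying optimization problem and then invoke the implicit function theorem to transfer the continuity of the data distribution to the continuity of the parameters. Concretely, for each $t$ one takes $\theta_t \in \arg\min_{\theta} \mathcal{L}(\theta, t)$, where $\mathcal{L}(\theta,t) := \mathbb{E}_{X \sim P_t(X)}\, D\!\big(P_t(Y|X)\,\|\,g(X;\theta)\big)$ is a discrepancy (e.g.\ KL divergence or expected negative log-likelihood) between the model and the true conditional at time $t$. Assumption~\ref{ass:data_continue} provides that $t \mapsto P_t(Y|X)$ is continuous with dynamics $\partial_t P_t(Y|X) = f\big(P_t(Y|X), t\big)$; composing with the smooth maps $D$ and $g$ then shows that $\mathcal{L}(\theta,t)$ and its parameter-gradient $\nabla_\theta \mathcal{L}(\theta,t)$ are jointly continuous in $(\theta,t)$ and differentiable in $t$, with $\partial_t \nabla_\theta \mathcal{L}$ expressible through $f$ by the chain rule.

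The second step uses the first-order optimality condition $\nabla_\theta \mathcal{L}(\theta_t, t) = 0$ for all $t$. Viewing this as an implicit relation $F(\theta,t) = 0$ with $F := \nabla_\theta \mathcal{L}$, the implicit function theorem yields a continuously differentiable branch $t \mapsto \theta_t$ as long as the Hessian $H(\theta_t,t) := \nabla^2_\theta \mathcal{L}(\theta_t,t)$ is nonsingular at the minimizer. Differentiating $F(\theta_t,t)=0$ in $t$ then produces the governing ODE
\begin{equation}
\dot{\theta}_t \;=\; -\,H(\theta_t,t)^{-1}\,\partial_t \nabla_\theta \mathcal{L}(\theta_t,t),
\end{equation}
whose right-hand side is assembled from $\theta_t$ itself (via $H$ and the model $g$) and from $\partial_t P_t(Y|X)=f(\cdot)$ (via the $t$-derivative of the loss). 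Hence $\dot{\theta}_t = h(\theta_t, f)$ for some map $h$, i.e.\ the model dynamics are jointly determined by the current model state and the data-dynamics function $f$, which is exactly the assertion; continuity of $t\mapsto\theta_t$ follows a fortiori from differentiability.

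I expect the main obstacle to be the non-degeneracy hypothesis: over-parametrized deep networks typically make $\mathcal{L}(\cdot,t)$ have a whole manifold of minimizers, so $H$ is singular and both the uniqueness of $\theta_t$ and the implicit function theorem fail as stated. I would handle this in one of two ways: (i) add a strictly convex regularizer, or work in a local chart transverse to the minimizer manifold, so that the relevant reduced Hessian is invertible and a unique smooth branch exists locally; or (ii) weaken the conclusion to the existence of \emph{some} continuous selection $\theta_t$ obeying an ODE of the displayed form, which is all the downstream framework requires since Koodos \emph{learns} the dynamics rather than computing them in closed form. A secondary technical point is making the chain rule ``through the distribution'' rigorous when $f$ acts on the (infinite-dimensional) object $P_t(Y|X)$; this becomes routine once one works with a parametric surrogate for $P_t(Y|X)$ or interprets the $t$-derivative weakly against a suitable class of test functions.
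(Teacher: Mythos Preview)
Your argument is correct and reaches the same qualitative conclusion, but the route differs from the paper's. The paper does not introduce a loss functional or invoke the implicit function theorem on optimality conditions; instead it directly posits that the predictive function itself obeys an ODE, $\frac{d}{dt}g(\cdot;\theta_t)=f(g(\cdot;\theta_t),t)$, applies the chain rule $\frac{d}{dt}g(\cdot;\theta_t)=J_g(\theta_t)\,\dot\theta_t$ with $J_g$ the Jacobian of $g$ in $\theta$, and then simply inverts $J_g$ to obtain $\dot\theta_t=J_g(\theta_t)^{-1}f(g(\cdot;\theta_t),t)$. So the paper's non-degeneracy assumption is invertibility of $J_g$, whereas yours is invertibility of the loss Hessian~$H$. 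Your derivation is the more careful one: it gives a concrete meaning to ``the'' parameter $\theta_t$ (as a minimizer), it makes the dependence on $f$ explicit through $\partial_t\nabla_\theta\mathcal{L}$, and it squarely names the over-parametrization obstacle together with standard remedies (regularization or restriction to a transverse chart). The paper's version is shorter and more heuristic---in particular, inverting $J_g(\theta_t)$ is delicate since $g(\cdot;\theta_t)$ is function-valued---but it suffices for the paper's purpose, which is only to \emph{motivate} learning a surrogate dynamics $h(\theta_t,t;\phi)$ rather than to compute $\dot\theta_t$ in closed form. Your option~(ii), weakening to existence of some smooth selection governed by an ODE of the displayed form, is exactly the spirit in which the paper uses the result.
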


\begin{proof} 
    The temporal derivative of the functional space $g(\cdot;\theta_t)$ represents its evolution in direct response to the changes in the conditional probability distribution. Without loss of generality, the ground-truth functional space can be modeled by an ordinary differential equation:
    \begin{equation}
    {d}g(\cdot; \theta_t)/{dt} = f(g(\cdot; \theta_t), t).
    \end{equation}
    Applying the chain rule to decompose the temporal derivative of $g$:
    \begin{equation}
        \frac{d}{dt}g(\cdot; \theta_t) = \sum_{i=1}^n \frac{\partial g}{\partial \theta_{t,i}} \frac{d\theta_{t,i}}{dt} = J_g(\theta_t) \frac{d\theta_t}{dt},
        \label{eq:partial}
    \end{equation}
    where $J_g(\theta_t)$ is the Jacobian matrix of $g$ with respect to $\theta_t$.
    
    By equating the Eq. \ref{eq:partial} to the expression involving $f$:
    \begin{equation}
        J_g(\theta_t) \frac{d\theta_t}{dt} = f(g(\cdot; \theta_t), t).
    \end{equation}
    Assuming that $J_g(\theta_t)$ is invertible, the ground-truth derivative of $\theta_t$ with respect to time is given by:
    \begin{equation}
        \frac{d\theta_t}{dt} = J_g(\theta_t)^{-1} f(g(\cdot; \theta_t), t).
    \label{eq:param_dyna}
    \end{equation}
\end{proof}

It is known from the Proof that the evolution continuity of $\theta_t$ follows from a differential equation. Eq. \ref{eq:param_dyna} identifies a dynamic framework for the predictive model parameters, which provides a strong motivation to develop dynamic systems for them.

The principal challenge arises from the unknown dynamics enclosed in $f$, which prevents getting $\theta_t$ by direct mathematical computation. Recognizing this, we propose a learning-based approach to model the parameter dynamics from the observed domains. We construct a learnable parameter dynamics function $h(\theta_t, t; \phi)$, parameterized by $\phi$, optimized to topological conjugation between the model dynamics and the data dynamics. Topological conjugation guarantees that the parameter’s orbits faithfully characterize the underlying data dynamics, thereby ensuring the model’s generalization capability across all time. Specifically, as illustrated in Fig. \ref{fig:model}(a) of the Data Flow and the Model Flow, assuming a conceptual perfect mapping $\xi$ from the data space to the parameter space, topological conjugation suggests that the composition of the learned dynamic $h$ after $\xi$ should behave identically to applying $\xi$ after the dynamics $f$, i.e., $h \circ \xi = \xi \circ f$ holds, where $\circ$ represents function composition. In the CTDG problem, the objective of conjugation is to optimize the model dynamics $h$ and the predictive parameters $\theta$ simultaneously, to minimize the discrepancy between the dynamically derived parameters from $h$ and those obtained through direct training, formulated as follows:

\begin{equation}
\phi = \arg \min_{\phi} \sum\nolimits^{T}_{i=1}\sum\nolimits^{i}_{j=1}\|(\theta_i, \theta^{j\rightarrow i}_i)\|_2,
\label{eq:ode_param_diff}
\end{equation}
where \( \|\cdot\|_2 \) denotes the Euclidean norm, and each $\theta_i$ is determined by:
\begin{equation}
\theta_i = \arg \min_{\theta_i} \mathcal{L}(Y_i, g(X_i; \theta_i)),
\label{eq:init_value}
\end{equation}
and $\theta^{j\rightarrow i}_i$ is defined as the integral parameters at $t_i$ obtained from $t_j<t_i$:
\begin{equation}
\theta^{j\rightarrow i}_i = \theta_j + \int\nolimits_{t_j}^{t_i} h(\theta_{\tau}, \tau; \phi) \, d\tau.
\label{eq:param_integrate}
\end{equation}
Here, $\mathcal{L}$ symbolizes the loss function tailored to the prediction task. By employing  Eq. \ref{eq:param_integrate}, the model dynamics are defined for all observation time, and $\phi$ can be obtained by optimizing Eq. \ref{eq:ode_param_diff} and Eq. \ref{eq:init_value} via gradient descent. After that, the parameters of the predictive model can be calculated at any specific time using ODE solvers. 

\subsection{Modeling nonlinear model dynamics by Koopman operators}
\label{sec:koopman}
The aforementioned learning approaches are, however, limited in efficient modeling and prediction due to the entangled nonlinear dynamics in the high-dimensional parameters space. Identifying strongly nonlinear dynamics through such space is particularly susceptible to overfitting \cite{brunton2016discovering}, leading to future predictive models eventually diverging from bifurcation to chaos. Despite the complexity of $h$, it is governed by the data dynamics $f$, which are typically low-dimensional and exhibit simple, predictable patterns. This suggests that the governing equations for $h$ are sparse within the over-parameterized space, and thus allow for a more simplified and manageable representation in a properly transformed space. Motivated by this, we propose leveraging the well-established Koopman Theory \cite{koopman1931hamiltonian,brunton2021modern} to represent these complex parameter dynamics. Koopman Theory provides a method for the global linearization of any nonlinear dynamics. It expresses the complex dynamic system as an infinite-dimensional Koopman operator acting on the Hilbert space of the system state measurement functions, in which the nonlinear dynamics will become linearized.

To facilitate this approach, the target is to identify a set of intrinsic coordinates $\varphi$ that spans a Koopman-invariant subspace. As illustrated in Fig. \ref{fig:model}(a) of the Model Flow and Koopman Flow, this transformation maps the parameters $\theta$ into a latent space $z = \varphi(\theta)$, with $z$ resides in a low $n$-dimensional space, $\mathcal{Z} = \mathbb{R}^n$, where the dynamics become more linearized. We also aim to find a finite-dimensional approximation of the Koopman operator, denoted as $\mathcal{K}\in\mathbb{R}^{n\times n}$, that acts on the latent space and advances the observation of the state to the future $\mathcal{K}:\mathcal{Z} \rightarrow \mathcal{Z}$. After that we have:

\begin{equation}
    {d}\varphi(\theta)/{dt} = \mathcal{K}\varphi(\theta).
\end{equation}

The $z$ dynamics are easier to track because they principleize the dynamical system while maintaining its characteristics. It gives us a tighter representation of the parameters within a linear space, which allows us to learn the simple $\mathcal{K}$ operator instead of the complex coupled dynamics $h(\cdot; \phi)$. Following the transformation, the dynamics of the parameters can be expressed by:
\begin{equation}
    z^{j\rightarrow i}_i =  z_j + \int\nolimits_{t_j}^{t_i} \mathcal{K} z_{\tau} \, d\tau, \quad \text{where } z_j = \varphi(\theta_j).
\end{equation}
Finally, an inverse transformation provided by $\theta^{j\rightarrow i}_i=\varphi^{-1}(z^{j\rightarrow i}_i)$ that maps Koopman space back to the original parameter space. The relational among $\theta$, $\mathcal{K}$, $\varphi$, $\varphi^{-1}$ and the Koopman invariant subspace are bounded by a series of loss functions detailed in the next Section.

\subsection{Joint optimization of model and its dynamics with prior knowledge}
\label{sec:optimization}
We introduce a comprehensive optimization approach designed to ensure that the system accurately captures the dynamics of the data. This process requires the joint optimization of several interconnected components under the optional constraint of additional prior knowledge about the underlying dynamics: the predictive model parameters $\theta_{1:T}$, the transformation functions $\varphi$ and $\varphi^{-1}$, and the Koopman operator $\mathcal{K}$. Our primary objectives are threefold: \textit{1) Ensuring high prediction accuracy, 2) Maintaining consistency of parameters across different representations and transformations, and 3) Learning the Koopman invariant subspaces effectively}. Fig. \ref{fig:model}(b) illustrates the role of each constraint within our system: we manage two sets of states, intrinsic and integral, aligning across three spaces.

\textbf{Predictive Model Parameters} ($\theta$): Each observation time corresponds to a predictive model, which is tasked with making predictions and serving as the initial values for the dynamical system. They are primarily optimized to minimize the prediction error $L_{intri}$ of different domains:
\begin{equation}
    L_{intri} = \sum\nolimits_{i=1}^{T} \mathcal{L}(Y_i, g(X_i; \theta_i)).
\end{equation}

\textbf{Koopman Network Parameters} ($\varphi$, $\varphi^{-1}$, $\mathcal{K}$): We estimate a function that transforms the parameters of the predictive model into Koopman space. Meanwhile, these predictive model parameters must be stable and consistent when converted between representations. This is realized by three constraints:
\begin{enumerate}[left=0pt]
    \item \textbf{Reconstruction Loss:}
    An autoencoder is leveraged to map parameter space to Koopman space by encoder $\varphi$ and back to the original space by decoder $\varphi^{-1}$. $L_{recon}$ ensures consistency between $\theta$  and its reconstructed form via transformations:
    \begin{equation}
        L_{recon} = \sum\nolimits_{i=1}^{T} \|\theta_i, \varphi^{-1}(\varphi(\theta_i))\|_2.
    \end{equation}
    \item \textbf{Dynamic Fidelity Loss:}
    This term ensures that the transformation produces a Koopman invariant subspaces in which the dynamics can forward correctly. It measures the fit of the Koopman operator $\mathcal{K}$ against the transferred parameter:
    \begin{equation}
        L_{dyna} = \sum\nolimits_{i=1}^{T}\sum\nolimits_{j=1}^{i} \|z_i, z^{j\rightarrow i}_i\|_2,
    \end{equation}
    where $z_i = \varphi(\theta_i)$ and $z^{j\rightarrow i}_i =  z_j + \int_{t_j}^{t_i} \mathcal{K} z_{\tau} \, d\tau$.
    \item \textbf{Consistency Loss:}
    It measures the consistency between the original and the dynamic parameter in the model space:
    \begin{equation}
        L_{consis} = \sum\nolimits_{i=1}^{T}\sum\nolimits_{j=1}^{i} \|\theta_i, \theta^{j\rightarrow i}_i\|_2, \quad \text{where } \theta^{j\rightarrow i}_i = \varphi^{-1}(z^{j\rightarrow i}_i)
    \end{equation}
\end{enumerate}
Additionally, we load the dynamically integral parameters $\theta^{j\rightarrow i}_i$ back into the predictive model to evaluate its predictive capability, quantified by $L_{integ} = \sum_{i=1}^{T}\sum_{j=1}^{i} \mathcal{L}(Y_i, g(X_i; \theta^{j\rightarrow i}_i))$. Finally, the system optimizes the following combined loss to refine all components simultaneously:
\begin{equation}
    \{\theta_{1:T}, \varphi, \varphi^{-1}, \mathcal{K}\} = \mathop{\arg\!min}_{\theta_{1:T}, \varphi, \varphi^{-1}, \mathcal{K}} (\alpha L_{intri} + \alpha L_{integ} + \beta L_{recon} + \gamma L_{dyna} + \beta L_{consis}),
\end{equation}
with $\alpha$, $\beta$, and $\gamma$ as adjustable weights to balance the magnitude of each term, ensuring that no single term dominates during training.

During inference, given the moment $t_{s}$, the model uses the state from the nearest observation moment $t_{obs}$ as an initial value, integrating over the time interval $[t_{obs}, t_{s}]$ in Koopman space to give the generalized model state $\theta^{obs\rightarrow s}_s$ at the desired test moment.

\textbf{Analysis and Control of the Temporal Domain Generalization.}
Integrating Koopman theory into continuous temporal domain modeling facilitates the application of optimization, estimation, and control techniques, particularly through the spectral properties of the Koopman operator. We remark that the Koopman operator serves as a pivotal interface for analyzing and controlling the generalization process. Constraints imposed on the Koopman space will be equivalently mapped to the Model space. For instance, the eigenvalues of $\mathcal{K}$ are crucial as they provide insights into the system's stability and dynamics, as illustrated below:
\begin{equation}
    z^{j\rightarrow i}_i =  z_j + \int\nolimits_{t_j}^{t_i} \mathcal{K} z_{\tau} \, d\tau, \quad \text{where } \lambda_i \text{ is an eigenvalue of } \mathcal{K}
\end{equation}

\begin{enumerate}[left=0pt]
    \item \textbf{System Assessment.} The generalization process is considered stable if all \( \lambda_i \) satisfy \( \text{Re}(\lambda_i) < 0 \). Conversely, the presence of any \( \lambda_i \) such that \( \text{Re}(\lambda_i) > 0 \) indicates instability in the system. When \( \text{Re}(\lambda_i) = 0 \), the system may exhibit oscillatory behavior. By analyzing the locations of these poles in the complex plane, we can assess the system's long-term dynamics, helping us identify whether the generalized model is likely to collapse in the future.

    \item \textbf{Behavioral Constraints.} Adding explicit constraints to $\mathcal{K}$ can guide the generalization toward desired behaviors. This process not only facilitates the incorporation of prior knowledge about domains but also tailors the system to specific characteristics. To name just a few, if the data dynamics are known to be periodic, configuring $\mathcal{K}$ as $\mathcal{K}=B-B^T$, with $B$ as learnable parameters, ensures that the model system exhibits consistent periodic behavior since the eigenvalues of $B-B^T$ are purely imaginary values. Additionally, employing a low-rank approximation such as $\mathcal{K} = UV^T$, with $U, V \in \mathbb{R}^{n \times k}$ and $k<n$, allows the model to concentrate on the most significant dynamical features and explicitly control the degrees of freedom of the generalization process.
\end{enumerate}

\textbf{Theoretical Analysis.} In this work, we theoretically proved that our proposed continuous-time TDG method has a smaller or equal error compared to the discrete-time method for approximating temporal distribution drift. This demonstrates that \emph{the ODE method provides a more accurate approximation due to its consideration of the integral of changes over time, reducing the accumulation of errors compared to the step-wise updates of the discrete-time methods.}
\begin{theorem}[Superiority of continuous-time methods over discrete-time methods (informal)]
Continuous-time methods have smaller or equal errors compared to discrete-time methods in approximating temporal distribution drift, due to its consideration of the integral of changes over time.
\label{thm: continuous better than discrete}
\end{theorem}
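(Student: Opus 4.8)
\emph{Proof proposal.} The plan is to make the informal comparison precise by fixing a single ground-truth drift and viewing each method as a particular quadrature rule applied to it. Concretely, by Assumption~\ref{ass:data_continue} and Theorem~\ref{the:param_continuity} the true parameter trajectory satisfies $d\theta_t/dt = h(\theta_t,t)$ on each step $[t_i,t_{i+1}]$, so the exact temporal drift is $\theta_{i+1}-\theta_i = \int_{t_i}^{t_{i+1}} h(\theta_\tau,\tau)\,d\tau$. I would model a discrete-time TDG update as the first-order / forward-Euler rule $\hat\theta_{i+1} = \hat\theta_i + (t_{i+1}-t_i)\,h(\hat\theta_i,t_i)$ --- the step-wise analogue of the Taylor-extrapolation and recurrent updates used by GI and DRAIN --- and the continuous-time update as $\tilde\theta_{i+1} = \tilde\theta_i + \int_{t_i}^{t_{i+1}} h(\tilde\theta_\tau,\tau)\,d\tau$, solved to solver tolerance $\varepsilon_{\mathrm{ode}}$. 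The standing regularity assumption is that $h$ is $C^1$ with $h$ and its time derivative Lipschitz (constants $L$ and $M$), which is the smoothness already implicit in the gradual-concept-drift hypothesis.

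The first key step is the \emph{single-step} comparison. Taylor's theorem with the Lagrange remainder gives $\int_{t_i}^{t_{i+1}} h(\theta_\tau,\tau)\,d\tau = (t_{i+1}-t_i)\,h(\theta_i,t_i) + \tfrac12 (t_{i+1}-t_i)^2 \tfrac{d}{dt}h(\theta_{\xi},\xi)$ for some $\xi\in(t_i,t_{i+1})$, so the discrete rule carries a local error of order $\tfrac12 M (t_{i+1}-t_i)^2$ while the continuous rule carries only $\varepsilon_{\mathrm{ode}}$, which can be driven below any threshold. When the drift is affine in $t$ on a step, the Lagrange remainder vanishes and the two local residuals coincide: this is precisely the ``or equal'' clause of the theorem.

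The second step is \emph{error accumulation}. Setting $e_i = \|\hat\theta_i - \theta_i\|$ and $\tilde e_i = \|\tilde\theta_i - \theta_i\|$, the Lipschitz bound on $h$ and a discrete Gr\"onwall argument yield $e_{i+1} \le (1 + L(t_{i+1}-t_i))\,e_i + \tfrac12 M(t_{i+1}-t_i)^2$ and $\tilde e_{i+1} \le (1 + L(t_{i+1}-t_i))\,\tilde e_i + \varepsilon_{\mathrm{ode}}$. Unrolling both recursions over $i=1,\dots,T$ gives the classical global-error bound $e_T = O(\max_i (t_{i+1}-t_i))$ versus $\tilde e_T = O(\varepsilon_{\mathrm{ode}})$, and since the continuous per-step residual term-by-term dominates (is $\le$) the discrete one under the same growth factor, monotonicity of the recursion forces $\tilde e_T \le e_T$, with equality exactly when every step is curvature-free and $\varepsilon_{\mathrm{ode}}$ equals the (then common) residual. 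Translating the parameter error back through the Lipschitzness of $g(\cdot;\theta)$ in $\theta$ converts this into the claimed bound on the approximated conditional distribution drift.

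The step I expect to be the main obstacle is making the comparison genuinely \emph{fair}: in practice the discrete and continuous methods do not see the same $h$, since each fits its own dynamics model from the same finite data, so one must argue that the structural discretization error of the step-wise scheme is an \emph{additive, sign-definite} contribution on top of whatever statistical error the two schemes share, rather than a quantity that could fortuitously cancel. I would handle this by conditioning on a common (population-optimal) learned drift and treating the estimation error as identical for both methods, so that only the quadrature/discretization gap distinguishes them; the honest caveat --- and the reason the statement is labelled informal --- is that this isolates the discretization effect and does not preclude a particular finite sample making the discrete estimate accidentally better.
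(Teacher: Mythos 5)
Your proposal is correct and follows the same basic route as the paper's proof --- a per-step Taylor comparison of local truncation errors followed by accumulation over the training horizon and extrapolation to the target time --- but the error bookkeeping differs in ways that matter. The paper charges the ODE method a local error of $\mathcal{O}((\Delta t_i)^2)$ (the Taylor remainder of the integral) and the RNN a local error of $\mathcal{O}(\Delta t_i)$, then sums these without a growth factor and concludes from ``$(\Delta t_i)^2$ is much smaller than $\Delta t_i$'' --- a comparison that is only valid for $\Delta t_i<1$, even though the paper's own assumptions allow the intervals to be large. You instead treat the discrete update as forward Euler with its textbook local error $\tfrac12 M(\Delta t_i)^2$, charge the continuous method only the solver tolerance $\varepsilon_{\mathrm{ode}}$, and propagate both through a discrete Gr\"onwall recursion; this yields global rates $\mathcal{O}(\max_i \Delta t_i)$ versus $\mathcal{O}(\varepsilon_{\mathrm{ode}})$ and a conclusion that does not hinge on the step sizes being small. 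Your version also supplies two things the paper omits: an explicit characterization of the ``or equal'' case (drift affine in $t$ on each step, so the Lagrange remainder vanishes) and the fairness caveat that both methods must be compared on a common learned drift so that the discretization gap is an additive, sign-definite contribution rather than something that could cancel against estimation error. The one element the paper makes explicit that you fold in implicitly is the separate ``error propagation to the target domain'' term for an arbitrary future time $s>t_T$ ($\mathcal{O}((s-t_T)^3)$ versus $\mathcal{O}(s-t_T)$ in their accounting); your recursion covers this by extending the index set past $t_T$, so nothing is lost.
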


The formal version and proof of Theorem~\ref{thm: continuous better than discrete} are given in Appendix \ref{app:theorem}. We also provide a detailed model complexity analysis in Appendix \ref{app:complex}.

\section{Experiment}
\label{sec:exp}
In this section, we present the performance of the Koodos in comparison to other approaches through both quantitative and qualitative analyses. Our evaluation aims at \textit{1) assessing the generalizability of Koodos in continuous temporal domains;} \textit{2) assessing whether Koodos captures the correct underlying data dynamics}; and \textit{3) assessing whether Koodos can use inductive bias to guide the behavior of the generalization}. Detailed experiment settings (i.e., dataset details, baseline details, hyperparameter settings, ablation study, scalability analysis, and sensitivity analysis) are demonstrated in Appendix \ref{app:exp_detail}. Besides, since the TDG is a special case of the CTDG, we also conducted experiments on the traditional discrete temporal domain generalization task. Results are shown in Appendix \ref{app:discreate}.

\textbf{Datasets.}
We compare with classification datasets: Rotated Moons (2-Moons), Rotated MNIST (Rot-MNIST), Twitter Influenza Risk (Twitter), and Yearbook; and the regression datasets: Tropical Cyclone Intensity (Cyclone), House Prices (House). More details can be found in Appendix \ref{app:dataset}.

\textbf{Baselines.}
We employ three categories of baselines: \textbf{Practical baselines}, including 1) Offline; 2) LastDomain; 3) IncFinetune; 4) IRM \cite{arjovsky2019invariant}; 5) V-REx \cite{krueger2021out}. \textbf{Discrete temporal domain generalization methods}, including 1) CIDA \cite{wang2020continuously}; 2) TKNets \cite{zeng2024generalizing}; 3) DRAIN \cite{bai2023temporal}; 4) DRAIN-$\Delta t$. \textbf{Continuous temporal domain generalization methods}, including 1) DeepODE \cite{chen2018neural}. Comparison method details can be found in Appendix \ref{app:baseline}.

\textbf{Metrics.}
Error rate (\%) is used for classification tasks. As the Twitter dataset has imbalanced labels, the Area Under the Curve (AUC) of the Receiver Operating Characteristic (ROC) curve is used. Mean Absolute Error (MAE) is used for regression tasks. All models were trained on training domains and then deployed on all unseen test domains. Each method's experiments were repeated five times, with mean results and standard deviations reported. Detailed parameter settings for each dataset are provided in Appendix \ref{app:setting}.

\begin{table}[t]
\centering
\caption{Performance comparison on continuous temporal domain datasets. The classification tasks report Error rates (\%) except for the AUC for the Twitter dataset. The regression tasks report MAE. 'N/A' implies that the method does not support the task.}
\resizebox{\textwidth}{!}{%
\begin{tabular}{l|cccc|cc}
\toprule
\multirow{2}{*}{\textbf{Model}} & \multicolumn{4}{c|}{\textbf{Classification}} & \multicolumn{2}{c}{\textbf{Regression}} \\
\cmidrule(lr){2-7}
& \textbf{2-Moons} & \textbf{Rot-MNIST} & \textbf{Twitter} & \textbf{Yearbook} & \textbf{Cyclone} & \textbf{House} \\ 
\midrule
\textbf{Offline}     & 13.5 $\pm$ 0.3 & \underline{6.6 $\pm$ 0.2} & 0.54 $\pm$ 0.09 & 8.6 $\pm$ 1.0 & 18.7 $\pm$ 1.4 & 19.9 $\pm$ 0.1 \\
\textbf{LastDomain}  & 55.7 $\pm$ 0.5 & 74.2 $\pm$ 0.9 & 0.54 $\pm$ 0.12 & 11.3 $\pm$ 1.3 & 22.3 $\pm$ 0.7 & 20.6 $\pm$ 0.7  \\
\textbf{IncFinetune} & 51.9 $\pm$ 0.7 & 57.1 $\pm$ 1.4 & 0.52 $\pm$ 0.01 & 11.0 $\pm$ 0.8 & 19.9 $\pm$ 0.7 & 20.6 $\pm$ 0.2  \\
\textbf{IRM} & 15.6 $\pm$ 0.2 & 8.6 $\pm$ 0.4 & 0.53 $\pm$ 0.11 & \underline{8.3 $\pm$ 0.5} & 18.0 $\pm$ 0.8 & 19.8 $\pm$ 0.2 \\
\textbf{V-REx} & \underline{12.8 $\pm$ 0.2} & 8.6 $\pm$ 0.3 & 0.58 $\pm$ 0.05 & 8.9 $\pm$ 0.5 & 17.7 $\pm$ 0.5 & 20.2 $\pm$ 0.1 \\
\textbf{CIDA} & 18.7 $\pm$ 2.0 & 8.3 $\pm$ 0.7 & 0.63 $\pm$ 0.03 & 8.4 $\pm$ 0.8 & \underline{17.0 $\pm$ 0.4} & 10.2 $\pm$ 1.0 \\
\textbf{TKNets} & 39.6 $\pm$ 1.2 & 37.7 $\pm$ 2.0 & 0.57 $\pm$ 0.04 & 8.4 $\pm$ 0.3 & N/A & N/A \\
\textbf{DRAIN} & 53.2 $\pm$ 0.9 & 59.1 $\pm$ 2.3 & 0.57 $\pm$ 0.04 & 10.5 $\pm$ 1.0 & 23.6 $\pm$ 0.5 & \underline{9.8 $\pm$ 0.1}  \\
\textbf{DRAIN-$\Delta t$} & 46.2 $\pm$ 0.8 & 57.2$\pm$ 1.8 & 0.59 $\pm$ 0.02 & 11.0 $\pm$ 1.2 & 26.2 $\pm$ 4.6 & 9.9 $\pm$ 0.1  \\
\textbf{DeepODE} & 17.8 $\pm$ 5.6 & 48.6 $\pm$ 3.2 & \underline{0.64 $\pm$ 0.02} & 13.0 $\pm$ 2.1 & 18.5 $\pm$ 3.3 & 10.7 $\pm$ 0.4  \\
\textbf{Koodos (Ours)}     & \textbf{2.8 $\pm$ 0.7}  & \textbf{4.6 $\pm$ 0.1}  & \textbf{0.71 $\pm$ 0.02} & \textbf{6.6 $\pm$ 1.3}  & \textbf{16.4 $\pm$ 0.3}              & \textbf{9.0 $\pm$ 0.2}  \\
\bottomrule
\end{tabular}
}
\label{tab:continuous_exp}
\end{table}

\subsection{Quantitative analysis: generalization across continuous temporal domains}
We first present the performance of our proposed method against baseline methods, highlighting results from Table \ref{tab:continuous_exp}. Koodos exhibits outstanding generalizability across continuous temporal domains. A key observation is that all baseline models struggle to handle synthetic datasets, particularly challenged by the continuous and substantial concept drift (i.e., 18 degrees of rotation per second). In real-world datasets, methods like CIDA, DRAIN, and DeepODE demonstrate effectiveness in certain cases. However, the performance gap between them and Koodos highlights the importance of explicitly considering continuous temporal modeling. For instance, while DRAIN attempts to address domain dynamics through a probabilistic sequential approach via LSTM, this introduces considerable errors due to the inherent discrete temporal. Moreover, while DRAIN-$\Delta t$ and DeepODE adjust for temporal irregularities accordingly, they fail to adequately synchronize the data and model dynamics, leading to unsatisfactory results. In contrast, Koodos establishes a promising approach and a benchmark in CTDG tasks, with quantitative analysis firmly confirming the approach's superiority.

\begin{figure*}[h]
	\centering
	\includegraphics[width=0.98\textwidth]{./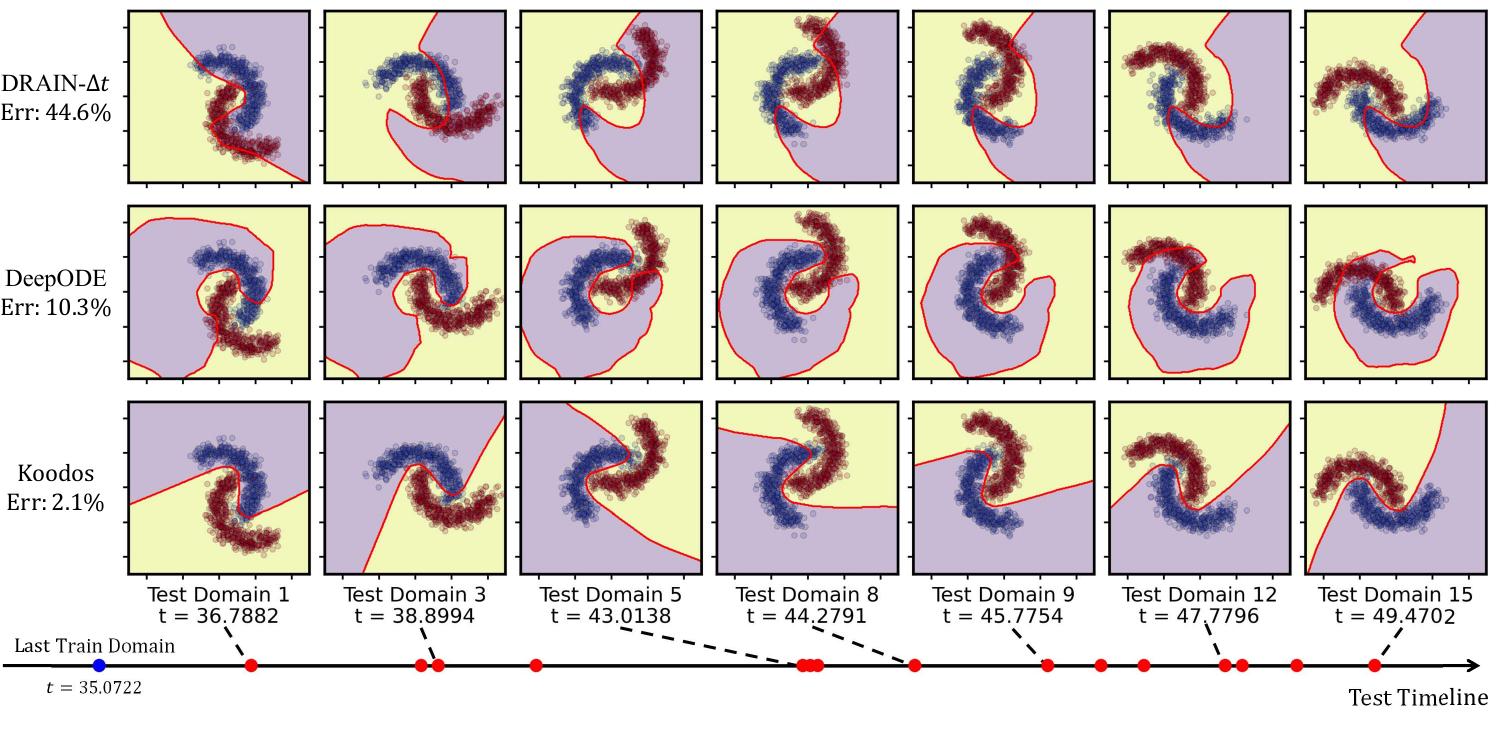}
	\caption{Visualization of decision boundary of the 2-Moons dataset (purple and yellow show data regions, red line shows the decision boundary). Top to bottom compares two baseline methods with ours; left to right shows partial test domains (all test domains are marked with red points on the timeline). All models are learned using data before the last train domain.}
	\label{fig:boundary}
\end{figure*}

\begin{figure*}[h]
	\centering
	\includegraphics[width=0.98\textwidth]{./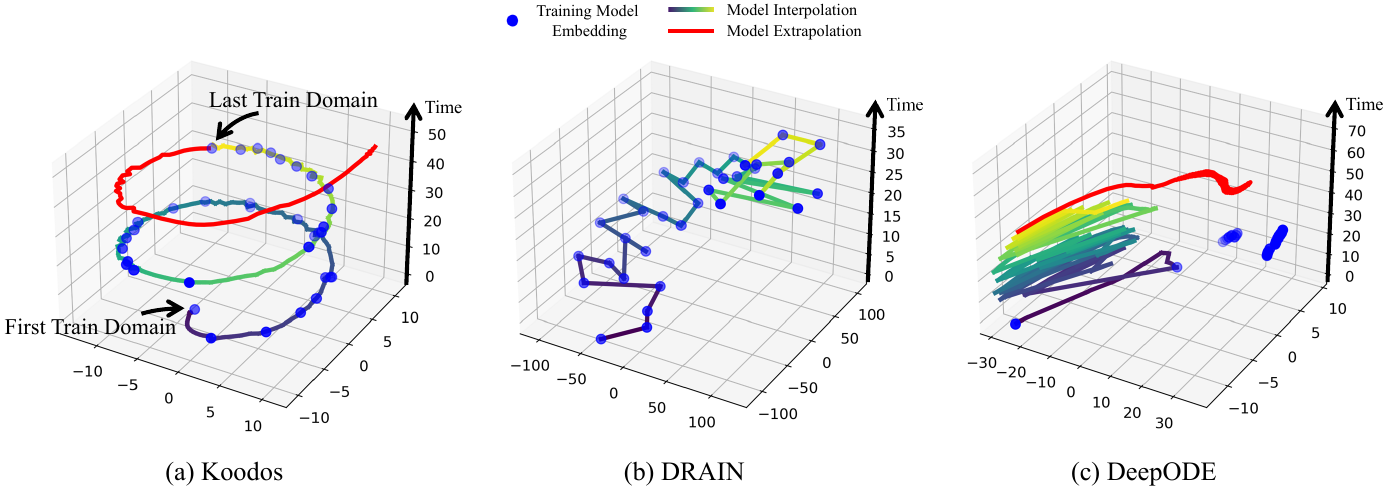}
	\caption{Interpolated and extrapolated predictive model trajectories. \textbf{Left}: Koodos captures the essence of generalization through the harmonious synchronization of model and data dynamics; \textbf{Middle}: DRAIN, as a probabilistic model, fails to capture continuous dynamics, which is presented as jumps from one random state to another. \textbf{Right}: DeepODE demonstrates a certain degree of continuity, but the dynamics are incorrect.}
	\label{fig:embed}
\end{figure*}


\subsection{Qualitative analysis: data dynamics and the learned model dynamics}
We conducted a qualitative comparison of different models by visualizing their decision boundaries on the 2-Moons dataset, as depicted in Fig. \ref{fig:boundary}. Each row represents a different method: DRAIN-$\Delta t$, DeepODE, and Koodos, with the timeline at the bottom tracking progress through test domains. DRAIN-$\Delta t$ displays the highest error rate, showing substantial deviation from the anticipated trajectories, especially after the third domain. We also observe that DRAIN-$\Delta t$ seems to freeze when predicting multiple steps, likely due to its underlying model, DRAIN, uses a recursive training strategy within a single domain and is explicitly designed for one-step prediction. DeepODE shows a relatively better performance. It benefits from leveraging ODEs to maintain continuity in the model dynamics. However, the nonlinear variation of the predictive model parameters complicates its ability to abstract and simplify the real dynamics. Its predictions start close to the desired paths but diverge over time. Finally, Koodos exhibits the highest performance with clear and concise boundaries, consistently aligning with the actual dynamics and maintaining high fidelity across all tested domains, showcasing its robustness in continuous temporal domain modeling and generalization.

Fig. \ref{fig:embed}(a) demonstrates the space-time evolution of the generalization process of Koodos. By applying t-SNE, the predictive model parameters are reduced to a 2-dimensional representation space, plotted against the time on the Z-axis. We used Koodos to interpolate the model states (35 seconds displayed in blue-yellow line) among the training domain states (marked by blue docs) in steps of 0.2 seconds, and similarly extrapolated 75 steps (15 seconds displayed in red line). The visualization clearly shows that Koodos synchronizes the model dynamics with the data dynamics: the interpolation creates a cohesive, upward-spiraling trajectory transitioning from the first to the last training domain, while the extrapolation correctly extends this trajectory outward into a new area, demonstrating the effective of Koodos from another intuitive way. We also show the space-time evolution of baseline models in Fig. \ref{fig:embed}(b,c), in which we do not find meaningful patterns of the generalization process.

\begin{figure*}[h]
	\centering
	\includegraphics[width=0.67\textwidth]{./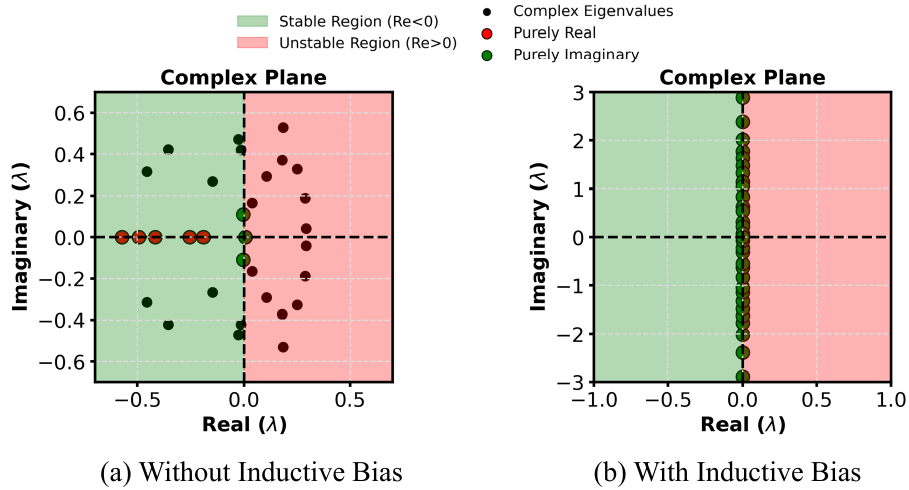}
	\caption{Eigenvalue distribution of the Koopman operator. \textbf{Left}: $\mathcal{K}$ as learnable; \textbf{Right}: $\mathcal{K}=B-B^T$ with $B$ as learnable.}
	\label{fig:inductive}
\end{figure*}

\begin{figure*}[h]
	\centering
	\includegraphics[width=0.69\textwidth]{./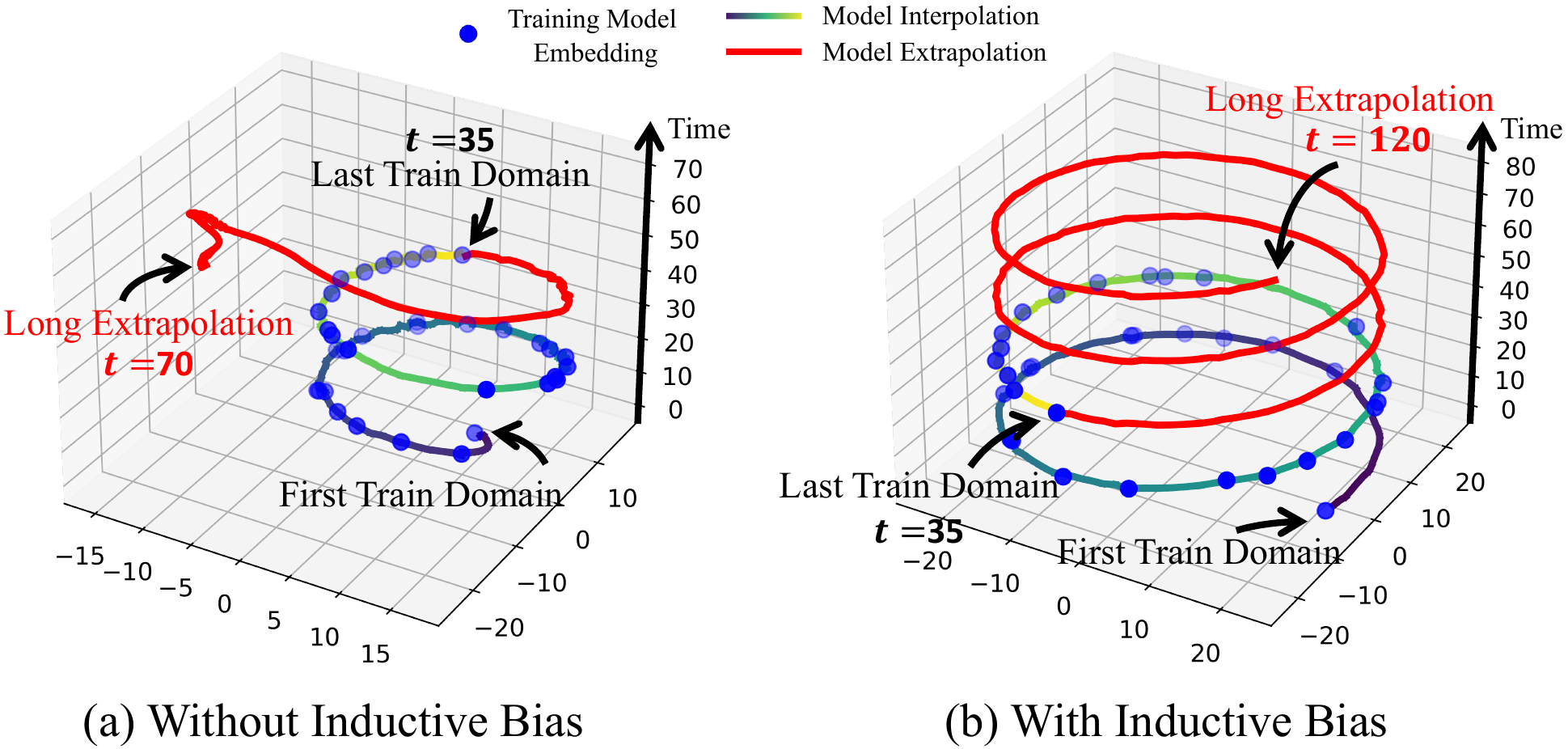}
	\caption{Extremely long-term extrapolated predictive model trajectories in uncontrolled and controlled settings. \textbf{Left}: $\mathcal{K}$ as learnable; \textbf{Right}: $\mathcal{K}=B-B^T$ with $B$ as learnable.}
	\label{fig:control_embed}
\end{figure*}

\subsection{Analysis and control of the generalization process}
The learned Koopman operator provides valuable insights into the dynamics of generalized predictive models. We analyze the behavior of the learned Koodos model on the 2-Moons dataset, focusing on the eigenvalues of the Koopman operator. As illustrated in Fig. \ref{fig:inductive}(a), the eigenvalues are distributed across both stable (Re<0) and unstable (Re>0) regions on the complex plane. The spectral distribution suggests that while Koodos performs effectively across all tested domains, it will eventually diverge towards instability and finally collapse. To validate, we extended the extrapolation of the Koodos to an extremely long-term (i.e., 35 seconds future). Results depicted in Fig. \ref{fig:control_embed}(a) demonstrate that the generalized model's trajectory significantly deviates from the anticipated spiral path, suggesting that extremely long-term generalization will end up with the accumulation of errors.

Fortunately, Koodos's innovative design allows it to incorporate knowledge that extends beyond the observable data. By configuring the Koopman operator as $\mathcal{K} = B - B^T$, we ensure that all eigenvalues of the final learned $\mathcal{K}$ are purely imaginary (termed Koodos$^+$), promoting stable and periodic behavior. This adjustment is reflected in Fig. \ref{fig:inductive}(b), where the eigenvalues are tightly clustered around the imaginary axis. As shown in Fig. \ref{fig:control_embed}(b), the embeddings and trajectories of Koodos$^+$ are cohesive and maintain stability over extended periods. Remarkably, even with 85 seconds of extrapolation, Koodos$^+$ shows no signs of performance degradation, highlighting the significance of human inductive bias in improving the robustness and reliability of the generalization process.

\section{Conclusion}
We tackle the problem of continuous temporal domain generalization by proposing a continuous dynamic system network, Koodos. We characterize the dynamics of the data and determine its impacts on model dynamics. The Koopman operator is further learned to represent the underlying dynamics. We also design a comprehensive optimization framework equipped with analysis and control tools. Extensive experiments show the efficiency of our design.

\begin{ack}
This work was supported by Japan Science and Technology Agency (JST) SPRING, Grant Number JPMJSP2108, the SPRING GX Overseas Dispatch Program, and the Shibasaki Scholarship Foundation.
\end{ack}

\bibliographystyle{plain}
\bibliography{bibliography}

\appendix
\newpage

\section{Appendix}

\subsection{Experimental details}
\label{app:exp_detail}
\subsubsection{Dataset details}
\label{app:dataset}

In this paper, we explore a variety of datasets to analyze the performance of machine learning models under conditions of continuous temporal domain. We employ the following datasets:
\begin{enumerate}
    \item \textbf{Rotated 2-Moons} 
    This dataset is a variant of the classic 2-entangled moons dataset, where the lower and upper moon-shaped clusters are labeled 0 and 1, and each contains 500 instances. We randomly generated 50 distinct timestamps from the continuous real number time interval \([0, 50]\). Each timestamp corresponds to a specific domain, with each domain created by rotating the moons \(18^\circ\) counterclockwise per unit of time. \textit{The continuous concept drift is represented by the progressive positional changes of the moon-shaped clusters.}
    \item \textbf{Rotated MNIST} 
    This dataset is a variant of the classic MNIST dataset \citep{deng2012mnist}, comprising images of handwritten digits from 0 to 9. We randomly generated 50 distinct timestamps from the continuous real number time interval \([0, 50]\). Each timestamp corresponds to a specific domain, within which we randomly selected 1,000 images from the MNIST dataset and rotated them \(18^\circ\) counterclockwise per unit of time. \textit{Similar to 2-Moons, the continuous concept drift means the rotation of the images.}
    \item \textbf{Twitter}
    The Twitter dataset \citep{zhao2017feature,bai2023sign} utilizes tweet data to predict flu intensity. We randomly collected streaming tweets starting at 50 arbitrary timestamps lasting 7 days during the 2010-2014 flu seasons and then examined the volume of disease-related terms to predict current flu trends. The result is validated against the Influenza-Like Illness (ILI) activity levels reported by the Centers for Disease Control and Prevention (CDC). \textit{The continuous concept drift in this dataset is characterized by the fluctuations in tweet volumes over the years and the variation in flu activity throughout the season.}
    \item \textbf{Yearbook}
    The Yearbook dataset \citep{yao2022wild} consists of frontal-facing yearbook portraits collected between 1930 and 2013 from 128 high schools. The task is to classify gender from images. We randomly sampled 40 years of data from the 84-year dataset, with each year representing a domain to represent the incomplete temporal domain collection process, which resulted in variable time intervals between consecutive domains. \textit{The Yearbook dataset provides a visual record of evolving fashion styles and social norms across the decades.}
    \item \textbf{Cyclone}
    The Cyclone dataset \citep{chen2018rotation} is collected by satellite remote sensing and is dedicated to the task of tropical cyclone imagery to wind intensity. When each cyclone occurs, the satellite collects a series of images for its entire life cycle as a domain, with the date of its occurrence representing a temporal domain time. We focused on cyclone data from the West Pacific region covering 2014 to 2016 and formed 72 continuous domains. \textit{The dataset is event-triggered, and the variation in wind strength associated with the seasonal dates results in a continuous concept drift.}
    \item \textbf{House}
    This dataset comprises housing price data from 2013 to 2019 and is utilized for a regression task to predict the price of a house given the features. We extracted sales data for one-month durations across 40 arbitrary time periods from 2013 through 2019. \textit{The concept drift in this dataset is how the housing price changes over time for a certain region.}
\end{enumerate}
To more clearly illustrate the temporal arbitrariness of the continuous temporal domain, we plot the time distribution of all domains for each dataset in  Fig. \ref{fig:data_time}, where the blue line represents the timestamp of one specific domain. We use the last 30\%  domain of each dataset as test domains, which are marked with gray shading.

\begin{figure*}[h]
	\centering
	\includegraphics[width=0.6\textwidth]{./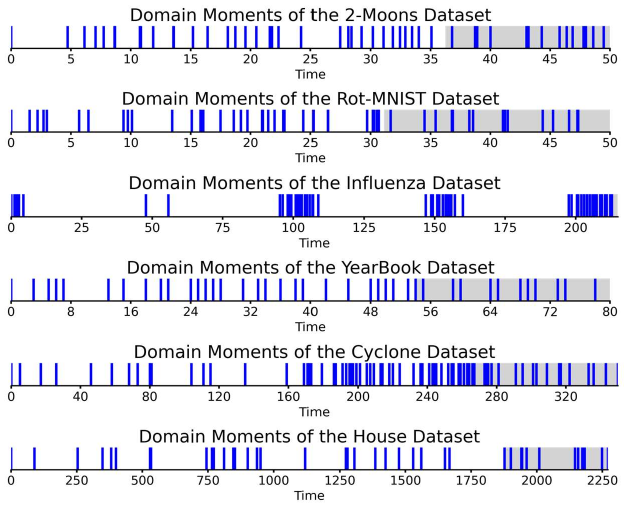}
	\caption{The distribution of continuous temporal domain datasets along the time dimension. The test domains are marked with gray shading.}
	\label{fig:data_time}
\end{figure*}

\subsubsection{Comparison methods}
\label{app:baseline}

\begin{enumerate}
    \item \textbf{Practical Baseline} \textit{(1) Offline Model}: This model operates without temporal variations and is trained using Empirical Risk Minimization (ERM) across all source domains. \textit{(2) LastDomain}: This model is also trained without temporal variations but focuses solely on the last source domain using ERM. \textit{(3) IncFinetune}: This approach begins by applying the baseline method at the initial time point and subsequently fine-tunes the model at each following time point using a lower learning rate. \textit{(4) IRM}: IRM is to train a predictive model over all distributions that do not rely on spurious correlations as much as possible. \textit{(5) V-REx}: Like IRM, V-REx performs causal identification, while also providing some robustness to changes in the covariate shift by risk extrapolation.
    \item \textbf{Discrete Temporal Domain Generalization Baseline} Continuous temporal domain tasks require consideration of domains for the long future rather than a limited number of steps. Existing methods are mainly unable to deal with this problem. Among them, although DRAIN \citep{bai2023temporal} is designed only for predicting one-step future prediction, its internal LSTM structure has the potential of long-term future prediction, so we implemented DRAIN as a discrete baseline. We implemented two versions of it: \textit{(1) DRAIN}: This approach tackles the temporal domain generalization problem by proposing a dynamic neural network, which uses an LSTM to capture the evolving pattern of a neural network, during the test, DRAIN will consider all future test domains as one domain, and utilizes the LSTM to predict the future parameters of this target domain. \textit{(2) DRAIN-$\Delta t$}: A variant of DRAIN, in which the internal LSTM is changed to a continues recurrent units \cite{schirmer2022modeling} to model irregular distributed domains. Under this setting, DRAIN-$\Delta t$ treats the test domains as an irregular time series and predicts the model state at that given moment. We also added CIDA \cite{wang2020continuously} and TKNets \cite{zeng2024generalizing} as comparisons. \textit{(3) CIDA} treats the domain index as a continuous variable to help the discriminator using a distance-based loss. \textit{(4) TKNets} trains a prediction model for the target domain by leveraging the evolving pattern among domains.
    \item \textbf{Continuous Temporal Domain Generalization Baseline} 
    There is currently no method to generalize the model over the continuous time domain. However, we constructed a method that can infer continuous states using a neural dynamical system. \textit{(1) DeepODE}: We train a deep neural Ordinary Differential Equations (NeuralODEs) \citep{chen2018neural} to model the continuous dynamics of the predictive model parameters. DeepODE consists of three structures. It first uses an encoder to embed the model parameters as dense vectors, then uses NeuralODEs to infer their state in the desired future, and later uses a decoder to transfer the state back to the parameters. DeepODE treats the first domain as an initial value and infers the state of future training domains at once, using the inferred model's performance on training domains to optimize its autoencoder and NeuralODEs. In the testing phase, when a specific moment is given, we use DeepODE to infer the prediction model parameters at that moment, and then reload the estimated parameters into the prediction model for prediction.
\end{enumerate}

\subsubsection{Model configuration}
\label{app:setting}
We use autoencoder to achieve the $\varphi$ and $\varphi^{-1}$, which we call Encoder and Decoder in the following. A linear transformation layer without bias (named Dynamic Model) is served as the Koopman operator. In certain complex data dynamics, the Koopman operator needs to expand its dimensions (potentially up to infinity) for a better approximation of the dynamics. Since it's not feasible to increase the dimensions endlessly, we approximate the infinitely wide Koopman operator using multiple linear layers with ReLU. In practice, we found that two linear layers work well enough. Additionally, Predictive Models for each domain are trained to provide initial parameter values for Generalized Models. We use NeuralODEs \cite{chen2018neural} as the ODEs solvers. For large models with massive parameters, we treat the head network as a feature extractor shared by all domains, inferring only the state of the remaining layers. All experiments are conducted on a 64-bit machine with two 20-core Intel Xeon Silver 4210R CPU @ 2.40GHz, 378GB memory, and four NVIDIA GeForce RTX 3090. We use Adam Optimizer for all experiments, and we specify the architecture as well as other details for each dataset as follows:
 
\begin{enumerate}
    \item \textbf{Rotated 2-Moons} The Predictive Model consists of 3 hidden layers, with a dimension of 50 each. We use a ReLU layer after each layer and a Sigmoid layer after the output layer. The Encoder and Decoder are both a 4-layer Multi-Layer Perceptron (MLP), with dimensions of $[1024, 512, 128, 32]$ for each layer. A 32-dimensional linear layer Dynamic Model serves as the Koopman operator. The learning rate for the Predictive Model is set at $1 \times 10^{-2}$, while for the others is set at $1 \times 10^{-3}$.
    \item \textbf{Rotated MNIST} The Prediction Model features a convolutional neural network architecture comprising three convolutional layers with channel $[32, 32, 64]$ and a kernel size of 3. Each convolutional layer is followed by a ReLU layer and a max pooling layer with a kernel size of 2. Following the flattening, the network includes two linear layers with dimensions $[128, 10]$. A dropout layer is added between the linear layers to prevent overfitting. We treat the convolutional layer as the feature extractor. The Encoder and Decoder are both a 4-layer MLP with dimensions of $[1024, 512, 128, 32]$. A 32-dimensional two linear layers Dynamic Model serves as the Koopman operator. The learning rate for all parts is set at $1 \times 10^{-3}$.
    \item \textbf{Twitter} The Prediction Model consists of 3 hidden layers, with a dimension of $[128, 32, 1]$. We use the ReLU layer after each layer and a Sigmoid layer after the output layer. The Encoder and Decoder are both a 4-layer MLP, with dimensions of $[1024, 512, 128, 32]$ for each layer. A 32-dimensional two linear layers Dynamic Model serves as the Koopman operator. The learning rate for all parts is set at $1 \times 10^{-3}$.
    \item \textbf{Yearbook} The Prediction Model features a convolutional neural network architecture comprising three convolutional layers with channel $[32, 32, 64]$ and a kernel size of 3. Each convolutional layer is followed by a ReLU layer and a max pooling layer with a kernel size of 2. Following the flattening, the network includes three hidden linear layers with dimensions $[128, 32, 1]$. A dropout layer is added between the linear layers to prevent overfitting. The convolutional layer is used as the feature extractor. The Encoder and Decoder are both a 4-layer MLP with dimensions of $[1024, 512, 128, 32]$. A 32-dimensional two linear layers Dynamic Model serves as the Koopman operator. The learning rate is set at $1 \times 10^{-3}$.
    \item \textbf{Cyclone} The Prediction Model features a convolutional neural network architecture comprising four convolutional layers with channel $[32, 32, 64, 64]$ and a kernel size of 3. Each convolutional layer is followed by a ReLU layer and a max pooling layer with a kernel size of 2. Following the flattening, the network includes three hidden linear layers with dimensions $[128, 32, 1]$. We treat the convolutional layer as the feature extractor. The Encoder and Decoder are both a 4-layer MLP, with dimensions of $[1024, 512, 128, 32]$ for each layer. A 32-dimensional 2 linear layers Dynamic Model serves as the Koopman operator. The learning rate for all parts is set at $1 \times 10^{-3}$.
    \item \textbf{House} The Prediction Model consists of 3 hidden layers with a dimension of 400. We use the ReLU layer after each layer. The Encoder and Decoder are both a 4-layer MLP with dimensions of $[1024, 512, 128, 32]$. A 32-dimensional two linear layers Dynamic Model serves as the Koopman operator. The learning rate for all parts is set at $1 \times 10^{-3}$.
\end{enumerate}

\subsubsection{Complexity analysis}
\label{app:complex}
In our framework, coordinate transformations between spaces are implemented using an autoencoder achieved by MLP. The complexity of these transformations is described by a linear relationship in terms of $N$, expressed as $\mathcal{O}(2(Nd+E)+F)$, where $N$ denotes the number of parameters $\theta$ in the predictive models, $d$ denotes the number of neurons in the first MLP layer, $E$ denotes for the number of parameters remaining in the autoencoder, and $F$ denotes the number of parameters in the Koopman operator. In many deep learning tasks, significant portions of the model's layers function as representation learning and can be shared, resulting in a small $N$ and controlled model complexity.

\begin{table}[ht]
\centering
\caption{Ablation test results for different datasets.}
\begin{tabular}{l|cc|c}
\toprule
\textbf{Ablation} & \textbf{2-Moons} & \textbf{Rot-MNIST} & \textbf{Cyclone} \\
\midrule
\ding{55}$L_{integ}$ & 27.4 $\pm$ 20.3& 42.7 $\pm$ 1.0 & 55.0 $\pm$ 1.0  \\
\ding{55}$L_{recon}$  & 3.2 $\pm$ 0.1 & 6.3 $\pm$ 0.3 & 17.2 $\pm$ 0.4 \\
\ding{55}$L_{dyna}$  & 5.9 $\pm$ 5.1 & 31.7 $\pm$ 28.9 & 23.3 $\pm$ 4.9 \\
\ding{55}$L_{consis}$  & 30.5 $\pm$ 18.4 & 5.1 $\pm$ 0.4 & 17.4 $\pm$ 0.5\\
\ding{55}$K.S$  & 37.2 $\pm$ 7 & OOM & OOM \\
Koodos & \textbf{2.8 $\pm$ 0.7} & \textbf{4.6 $\pm$ 0.1} & \textbf{16.4 $\pm$ 0.3} \\
\bottomrule
\end{tabular}
\label{tab:ablation}
\end{table}

\subsubsection{Ablation study}
\label{app:ablation}
We conducted an ablation study to systematically assess the contribution of various constraints and components within Koodos on two classification datasets: 2-Moons, Rot-MNIST, and a regression dataset Cyclone. The results are summarized in the Table \ref{tab:ablation}. For each test, specific loss constraints were removed, indicated by \ding{55}, and we also tested the effect of bypassing the Koopman Space but learning dynamics directly in the parameter space, designated as \ding{55}K.S.

As shown in the table, each component of constraints contributes to robust performance across all tested datasets. The removal of particular elements has marked effects on model stability and accuracy. Specifically, the absence of  $L_{integ}$ and $L_{dyna}$ constraints leads to significant performance degradation, highlighting their essential roles in learning the correct dynamics of the model through continuous temporal domain data. Removing other constraints leads to unpredictable results on certain datasets or a decline in effectiveness, indicating their importance in ensuring system effectiveness and stability under various input conditions. The considerable increase in error observed with the \ding{55}K.S on the 2-Moons dataset demonstrates the challenges associated with modeling nonlinear dynamics directly in the parameter space, and the out-of-memory (OOM) problems with Rot-MNIST and Cyclone upon the removal of the Koopman Space suggest that this component is not only pivotal but also computationally demanding. 

\begin{figure*}[h]
	\centering
	\includegraphics[width=0.8\textwidth]{./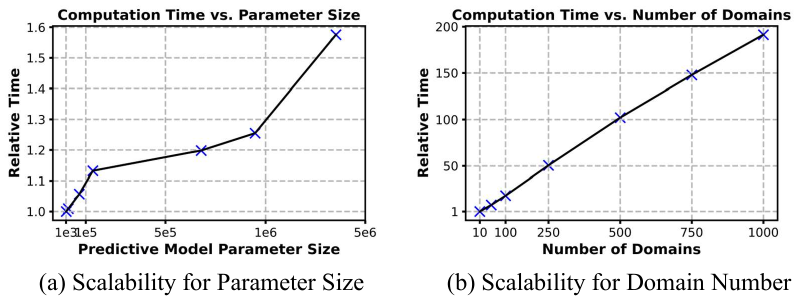}
	\caption{Scalability analysis w.r.t the number of parameters and the number of domains.}
	\label{fig:scalability}
\end{figure*}

\begin{table}[ht]
\centering
\caption{Cost of training and testing time.}
\begin{tabular}{l|c|c}
\toprule
\textbf{Model} & \textbf{Train Time (s)} & \textbf{Test Time (s)} \\
\midrule
DRAIN & 67 & <1 \\
DeepODE & 289 & 1 \\
CIDA & 610 & 3.05 \\
TKNets & 948 & 2 \\
Koodos & 566 & 2.83 \\
\bottomrule
\end{tabular}
\label{tab:run_time}
\end{table}

\subsubsection{Scalability analysis}
\label{app:scalability}
We conducted a comprehensive analysis to evaluate the scalability of our system concerning both the number of parameters of the predictive model and the number of domains, as shown in Fig. \ref{fig:scalability}. Computational times are normalized relative to the shortest time to provide a consistent basis.

To explore how the size of predictive model parameters affects runtime, we experimented with various configurations by varying the depth and number of neurons in the model. These configurations were tested on the Cyclone dataset with parameter counts ranging from 2,000 to over one million. As depicted in Fig. \ref{fig:scalability}(a), there is a relatively linear and low growth rate in computation time as the parameter size increases. This gradual increase suggests that the autoencoder in Koodos to encode the parameter space into a Koopman space effectively mitigates the impact of increased parameter scale, maintaining computational efficiency even as predictive model complexity grows.

To assess the effect of domain count on runtime, we generated synthetic 2-Moons datasets with varying domain numbers: 10, 50, 100, 250, 500, 750, and 1000 domains, each with 200 training instances categorized into two labels. The runtime is plotted against the number of domains in Fig. \ref{fig:scalability}(b), demonstrating a linear increase in computational time. This linear progression aligns with expectations for a system processing a sequence of input domains through ODEs, indicating predictable and manageable scalability as domain count increases.

We further conduct the model scalability analysis by comparing the running time of Koodos with other state-of-the-art baselines: DRAIN, DeepODE, CIDA, and TKNets in the 2-Moons dataset. As shown in Table \ref{tab:run_time}, our model strikes a good balance between training time and effectiveness.

\begin{figure*}[h]
	\centering
	\includegraphics[width=0.9\textwidth]{./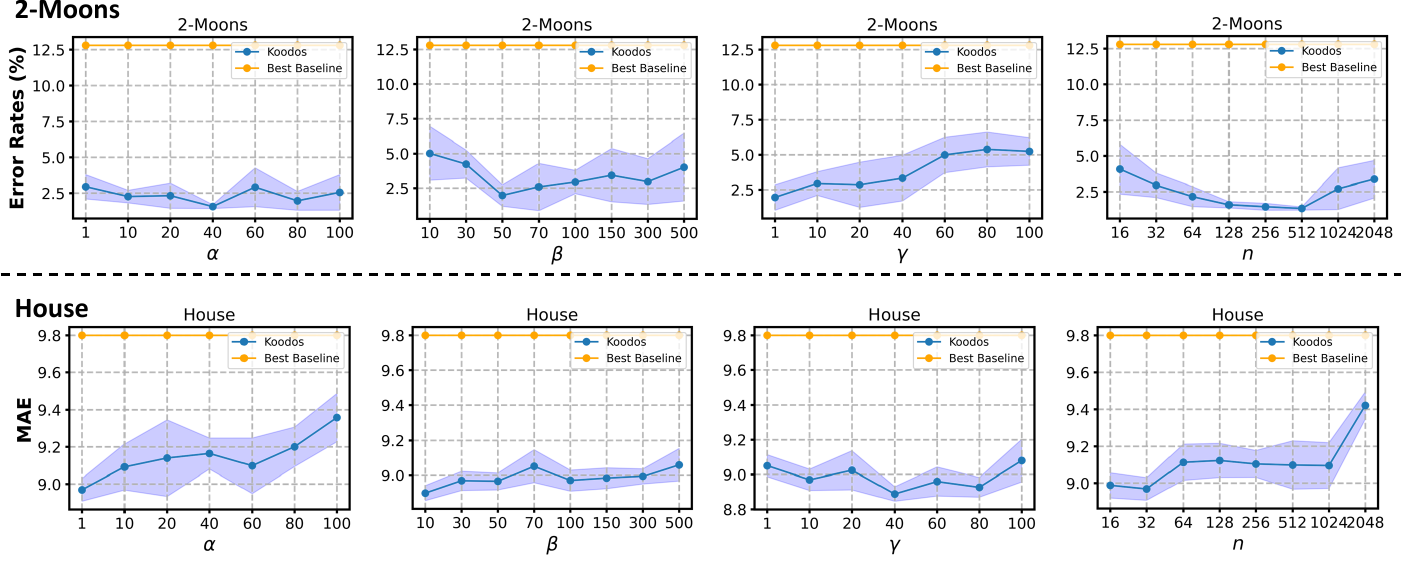}
	\caption{Sensitivity analysis.}
	\label{fig:sensitivity}
\end{figure*}

\subsubsection{Sensitivity analysis}
\label{app:sensitive}
We conducted the sensitivity analysis on 2-Moons and House datasets to understand the hyperparameters in Koodos: loss weights $(\alpha, \beta, \gamma)$ and the dimensions $n$ of the Koopman operator $\mathcal{K}$. The loss weights are set based on the magnitude of each loss term. For instance, in the 2-Moons dataset, the cross-entropy loss $L_{intri}$ and $L_{integ}$ are approximately 1 after convergence, the $L_{recon}$ and $L_{consis}$ in the Model space are around 0.01, and $L_{dyna}$ in the Koopman space is around 0.1. Accordingly, we set the initial values of $\alpha, \beta, \gamma$ to 1, 100, and 10, respectively. We then adjust each term independently within its respective range: $\alpha$ and $\gamma$ are varied from 1 to 100, and $\beta$ from 10 to 1000. The dimension $n$ of the Koopman operator varies from 16 to 2048. Fig. \ref{fig:sensitivity} shows the results.

It can be seen that: (1) Koodos exhibits stable behavior with controlled variance across a wide range of hyperparameter values, evidencing its insensitivity to hyperparameter variations. (2) Setting the loss weights of $(\alpha, \beta, \gamma)$ by the magnitude of each loss term is sufficient for achieving good performance, and fine-tuning the weights may give better results. (3) A hundreds-dimension approximation of the Koopman operator is sufficient to achieve good results. Too high may lead to overfitting.

\subsubsection{Limitations}
\label{app:limitation}
The limitation arises from the assumptions inherent in the research area of the temporal domain generalization framework. TDG assumes that domain pattern drifts follow certain predictable, smooth patterns, allowing for modeling future changes as a sequence. CTDG extends TDG to continuous time, further generalizing the problem to handle domains collected at arbitrary times. Thus, their assumptions are aligned in focusing on smooth, predictable concept drift. To address other kinds of drift, alternative frameworks may be required. However, different frameworks can be combined to address the full spectrum of domain generalization challenges. Exploring such a comprehensive approach may represent a promising research direction.

\clearpage

\begin{table}[ht]
\centering
\caption{Performance comparison on discrete temporal domain datasets. The classification tasks report Error rates (\%), and the regression tasks report MAE.}
\resizebox{\textwidth}{!}{%
\begin{tabular}{l|cccc|cc}
\toprule
\multirow{2}{*}{\textbf{Model}} & \multicolumn{4}{c|}{\textbf{Classification}} & \multicolumn{2}{c}{\textbf{Regression}} \\
\cmidrule(lr){2-7}
& \textbf{2-Moons} & \textbf{Rot-MNIST} & \textbf{ONP} & \textbf{Shuttle} & \textbf{House} & \textbf{Appliance} \\ 
\midrule
\textbf{Offline}     & 22.4 $\pm$ 4.6 & 18.6 $\pm$ 4.0 & \underline{33.8 $\pm$ 0.6} & 0.77 $\pm$ 0.1  & 11.0 $\pm$ 0.36 & 10.2 $\pm$ 1.1 \\
\textbf{LastDomain}  & 14.9 $\pm$ 0.9 & 17.2 $\pm$ 3.1 & 36.0 $\pm$ 0.2 & 0.91 $\pm$ 0.18 & 10.3 $\pm$ 0.16 & 9.1 $\pm$ 0.7  \\
\textbf{IncFinetune} & 16.7 $\pm$ 3.4 & 10.1 $\pm$ 0.8 & 34.0 $\pm$ 0.3 & 0.83 $\pm$ 0.07 & 9.7 $\pm$ 0.01  & 8.9 $\pm$ 0.5  \\
\textbf{CDOT}        & 9.3 $\pm$ 1.0  & 14.2 $\pm$ 1.0 & 34.1 $\pm$ 0.0 & 0.94 $\pm$ 0.17 & -               & -              \\
\textbf{CIDA}        & 10.8 $\pm$ 1.6 & 9.3 $\pm$ 0.7  & 34.7 $\pm$ 0.6 & -               & 9.7 $\pm$ 0.06  & 8.7 $\pm$ 0.2  \\
\textbf{GI}          & 3.5 $\pm$ 1.4  & 7.7 $\pm$ 1.3  & 36.4 $\pm$ 0.8 & 0.29 $\pm$ 0.05 & 9.6 $\pm$ 0.02  & 8.2 $\pm$ 0.6  \\
\textbf{DRAIN}       & \underline{3.2 $\pm$ 1.2}  & \underline{7.5 $\pm$ 1.1}  & 38.3 $\pm$ 1.2 & \underline{0.26 $\pm$ 0.05} & \underline{9.3 $\pm$ 0.14}  & \underline{6.4 $\pm$ 0.4}  \\
\textbf{Koodos (Ours)}     & \textbf{1.3 $\pm$ 0.4}  & \textbf{7.0 $\pm$ 0.3}  & \textbf{33.5 $\pm$ 0.4} & \textbf{0.24 $\pm$ 0.04} & \textbf{8.8 $\pm$ 0.19}  & \textbf{4.8 $\pm$ 0.3}  \\
\bottomrule
\end{tabular}
}
\label{tab:discreate}
\end{table}

\section{Experiments on discrete temporal domain generalization}
\label{app:discreate}
While our primary focus is on Continuous Temporal Domain Generalization (CTDG), we have also conducted experiments in the more conventional context of Temporal Domain Generalization (TDG). In essence, TDG can be seen as a specialized case of CTDG, where making the continuous system with a step size fixed at 1. Such a setting does not affect the proper ability of the CTDG model to capture the data dynamics and the corresponding model dynamics.

In TDG experiments, we follow the problem definition in DRAIN \cite{bai2023temporal}. Formally, a sequence of data domains is collected over discrete time intervals. Each domain, denoted as $\mathcal{D}_t$, corresponds to a dataset collected at a specific time $t$ where $t = 1, 2, ..., T$. We have a sequence of domains $\{\mathcal{D}_1, \mathcal{D}_2, ..., \mathcal{D}_T\}$, where each domain $\mathcal{D}_t = \{(x_i^t, y_i^t)\}_{i=1}^{N_t}$ consists of $N_t$ instances with features $x_i^t\in X_t$ and labels $y_i^t\in Y_t$, and $X_t$ and $Y_t$ represent the random variables for features and targets. The goal in TDG is to train a predictive model $g(X_t; \theta_t)$ on these historical domains, and modeling the dynamics of $\{\theta_1, \theta_2, ..., \theta_T\}$, then predict the $\theta_{T+1}$ for the unseen future domain $\mathcal{D}_{T+1}$ at time $t=T+1$, finally do the prediction task $Y_{T+1}^*=g(X_{T+1};\theta_{T+1})$.

We have replicated the experimental setups typically used in TDG research. Specifically, we used datasets, settings, and hyperparameters from the DRAIN \cite{bai2023temporal}. We compare with the following classification datasets: Rotated Moons (2-Moons), Rotated MNIST (Rot-MNIST), Online News Popularity (ONP), and Shuttle; and the following regression datasets: House prices dataset (House), Appliances energy prediction dataset (Appliance). Note that the first two datasets differ from datasets used in our CTDG main experiments in Exp. \ref{sec:exp}, as here they are time-regularly distributed and only have 1-step future domain to predict. We keep the architecture of the predictive model for each dataset the same as DRAIN \cite{bai2023temporal}. 

Performance comparison of all methods in terms of misclassification Error (in \%) for classification tasks and mean absolute error (MAE) for regression tasks. All experiments are repeated 5 times for each method, and we report the average results and the standard deviation in the quantitative analysis.

The results are shown in Table \ref{tab:discreate}. The Koodos model consistently outperforms all baselines across various datasets, demonstrating its robustness and superior adaptability also made to traditional temporal domain generalization tasks. Notably, Koodos significantly reduces error margins, especially in real-world datasets such as House and Appliance. The exceptional performance of Koodos can be attributed to its fundamental continuous dynamic system design, which effectively captures and synchronizes dynamic changes in data and models.

\clearpage

\section{Theorem proofs}
\label{app:theorem}

\section*{Assumptions}

1. \textbf{Continuous Data Distribution Drift}:
   - Let \( p(t) \) be the data distribution at time \( t \).
   - Assume \( p(t) \) is continuously differentiable, and the drift \( \frac{dp(t)}{dt} \) is Lipschitz continuous with constant \( L \).

2. \textbf{Training and Target Domains}:
   - The model is trained on a sequence of training domains \(\{p(t_i)\}_{i=1}^T\) where \( t_1 < t_2 < \ldots < t_T \).
   - The target domain is at a future time \( s \), with the data distribution \( p(s) \).
   - The time intervals \( \Delta t_i = t_{i+1} - t_i \) can be arbitrary.

3. \textbf{Model Training}:
   - For the ODE model, let \( x(t) \) be the state at time \( t \), governed by the ODE:
     \[
     \frac{dx(t)}{dt} = f(x(t), t),
     \]
     where \( f \) is a smooth function.
   - For the RNN model, let \( h_{t_i} \) be the state at discrete time \( t_i \), updated by:
     \[
     h_{t_{i+1}} = h_{t_i} + \Delta t_i \cdot \phi(h_{t_i}, x_{t_i}),
     \]
     where \( \phi \) is a smooth function and \( x_{t_i} \) is the input at time \( t_i \).

4. \textbf{Generalization Error}:
   - The generalization error on the target domain \( p(s) \) depends on the accumulated training errors and the error propagation from the last training domain \( p(t_T) \) to the target domain \( p(s) \).

\begin{theorem}[Formal version of Theorem~\ref{thm: continuous better than discrete}]
Given the assumptions, a continuous-time method using an Ordinary Differential Equation (ODE) provides a smaller or equal generalization error on an unseen target domain compared to a discrete-time method using a Recurrent Neural Network (RNN), due to its more accurate approximation of the data distribution drift over time.
\end{theorem}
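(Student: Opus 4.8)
The plan is to compare the two generalization errors by tracing how each method approximates the true distribution drift from the last training time $t_T$ to the target time $s$. First I would set up a common error decomposition: the generalization error at $s$ splits into (i) the accumulated training error on the observed domains $\{p(t_i)\}$ and (ii) the \emph{extrapolation error} incurred in propagating the learned state from $p(t_T)$ to $p(s)$. Since both methods are fit to the same training domains, I would argue that term (i) can be made comparable (or identical, under a shared fitting procedure), so the whole comparison reduces to bounding term (ii) for each method against the ground-truth trajectory governed by $dp(t)/dt$, which is Lipschitz with constant $L$ by the Assumptions.

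The core of the argument is a classical numerical-analysis estimate. For the ODE model, the learned state $x(t)$ solves $dx/dt = f(x(t),t)$ and, when $f$ matches the true drift, reproduces $p(s) = p(t_T) + \int_{t_T}^{s} \frac{dp(\tau)}{d\tau}\, d\tau$ exactly in the ideal case, with any residual controlled by the ODE-solver accuracy (which can be driven arbitrarily small). For the RNN model, the update $h_{t_{i+1}} = h_{t_i} + \Delta t_i \cdot \phi(h_{t_i}, x_{t_i})$ is exactly a forward-Euler discretization of the same underlying flow, so its propagation error over a step of length $\Delta t_i$ is the standard local truncation error $O(L\,\Delta t_i^2)$, summing to a global error $O(L \cdot \Delta t_{\max} \cdot (s - t_T))$ that is strictly positive whenever the drift has nonzero curvature and $\Delta t_{\max} > 0$. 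I would then invoke the integral-form identity
\begin{equation}
p(s) - p(t_T) = \int_{t_T}^{s} \frac{dp(\tau)}{d\tau}\, d\tau,
\end{equation}
and show the ODE method approximates the right-hand side with an error that is upper-bounded by the piecewise-constant (left-endpoint) Riemann-sum error of the RNN, yielding the desired inequality: $\mathrm{err}_{\text{ODE}}(s) \le \mathrm{err}_{\text{RNN}}(s)$, with equality exactly when the drift is piecewise linear over the relevant steps (so that Euler is exact).

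The main obstacle I anticipate is making term (i) — the training-phase errors — genuinely comparable between the two architectures, since the RNN and the ODE network are not identical function classes and in principle could fit the training domains differently; I would handle this either by assuming both achieve the same (e.g. zero, or $\epsilon$-bounded) training error on $\{p(t_i)\}$ as a hypothesis, or by absorbing the difference into a constant and showing the extrapolation-error gap dominates for sufficiently large horizon $s - t_T$ or sufficiently coarse spacing $\Delta t_{\max}$. A secondary technical point is justifying that the learned $f$ (resp. $\phi$) can be taken to interpolate the true drift at the training times without loss of generality; here I would lean on the smoothness of $f$ and $\phi$ and the Lipschitz bound on $dp/dt$ to control the interpolation error between grid points, which is precisely the same $O(L\,\Delta t^2)$ per-step quantity that drives the main estimate.
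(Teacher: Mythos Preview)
Your proposal is sound in outline and reaches the right conclusion, but it takes a genuinely different route from the paper. The paper does \emph{not} neutralize the training-phase term by assuming the two methods achieve comparable fit on the observed domains; instead it separately bounds the per-step training error of each method via a Taylor expansion, assigning the ODE $\mathcal{O}((\Delta t_i)^2)$ per step and the RNN $\mathcal{O}(\Delta t_i)$ per step, then sums to get accumulated training errors $\mathcal{O}\!\left(\sum_i (\Delta t_i)^2\right)$ versus $\mathcal{O}\!\left(\sum_i \Delta t_i\right)$. For the extrapolation term, the paper also does not treat the ODE as essentially exact: it gives the ODE an extrapolation error of order $\mathcal{O}((s-t_T)^3)$ and the RNN $\mathcal{O}(s-t_T)$, and concludes by comparing the two pairs of orders directly. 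Your plan, by contrast, is the textbook numerical-analysis framing (exact flow versus forward Euler, local truncation $O(L\,\Delta t_i^2)$, global error $O(L\,\Delta t_{\max}(s-t_T))$), which is cleaner and makes the constants explicit in $L$, but it leans on the equal-training-error hypothesis you correctly flag as an obstacle --- a hypothesis the paper never needs because it bounds both training errors outright and compares them term by term. What the paper's approach buys is freedom from any comparability assumption on fit quality; what yours buys is a more standard and transparent one-step-versus-integral estimate, at the cost of having to discharge that extra assumption.
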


\begin{proof}
Below is the formal proof:
    
\subsection*{Accumulated Training Error}

1. \textbf{ODE Model}:
   - The state \( x(t + \Delta t_i) \) can be approximated by the ODE integral:
     \begin{equation}
     x(t_i + \Delta t_i) = x(t_i) + \int_{t_i}^{t_i + \Delta t_i} f(x(\tau), \tau) \, d\tau.
     \end{equation}
   - Using the Taylor series expansion for \( f(x(t), t) \):
     \begin{equation}
     x(t_i + \Delta t_i) = x(t_i) + f(x(t_i), t_i) \Delta t_i + \frac{(\Delta t_i)^2}{2} \frac{\partial f}{\partial t} + \mathcal{O}((\Delta t_i)^3).
     \end{equation}
   - The error for one step is:
     \begin{equation}
     \text{Error}_{\text{ODE, step}} = \mathcal{O}((\Delta t_i)^2).
     \end{equation}
   - The accumulated error over \( T \) steps is:
     \begin{equation}
     \text{Error}_{\text{ODE, train}} = \sum_{i=1}^{T-1} \mathcal{O}((\Delta t_i)^2).
     \end{equation}
   - Given that the intervals \( \Delta t_i \) can be large, the accumulated error becomes:
     \begin{equation}
     \text{Error}_{\text{ODE, train}} = \mathcal{O}\left( \sum_{i=1}^{T-1} (\Delta t_i)^2 \right).
     \end{equation}

2. \textbf{RNN Model}:
   - The state \( h_{t_{i+1}} \) is updated as:
     \begin{equation}
     h_{t_{i+1}} = h_{t_i} + \Delta t_i \cdot \phi(h_{t_i}, x_{t_i}).
     \end{equation}
   - Assuming \( h_{t_i} \approx x(t_i) \) and \( \phi(h_{t_i}, x_{t_i}) \approx f(x(t_i), t_i) \), the error for one step is:
     \begin{equation}
     \text{Error}_{\text{RNN, step}} = \mathcal{O}(\Delta t_i).
     \end{equation}
   - The accumulated error over \( T \) steps is:
     \begin{equation}
     \text{Error}_{\text{RNN, train}} = \sum_{i=1}^{T-1} \mathcal{O}(\Delta t_i).
     \end{equation}
   - Given that the intervals \( \Delta t_i \) can be large, the accumulated error becomes:
     \begin{equation}
     \text{Error}_{\text{RNN, train}} = \mathcal{O}\left( \sum_{i=1}^{T-1} \Delta t_i \right).
     \end{equation}

\subsection*{Error Propagation to Target Domain}

1. ODE Model:
   - To handle an arbitrary future time \( s \), we integrate the error over many small steps from \( t_T \) to \( s \):
     \begin{equation}
     \text{Error}_{\text{ODE, target}} \approx \text{Error}_{\text{ODE, train}} + \int_{t_T}^{s} \mathcal{O}((\tau - t_T)^2) \, d\tau.
     \end{equation}
   - Since \( \tau \) ranges from \( t_T \) to \( s \):
     \begin{equation}
     \text{Error}_{\text{ODE, target}} \approx \text{Error}_{\text{ODE, train}} + \mathcal{O}((s - t_T)^3).
     \end{equation}

2. RNN Model:
   - Similarly, the error propagation to the target domain \( s \) involves many small steps:
     \begin{equation}
     \text{Error}_{\text{RNN, target}} \approx \text{Error}_{\text{RNN, train}} + \sum_{k=0}^{K-1} \mathcal{O}(\Delta t_k),
     \end{equation}
   - where \( K \) is the number of small steps from \( t_T \) to \( s \) and \( \Delta t_k \) are the small intervals:
     \begin{equation}
     \text{Error}_{\text{RNN, target}} \approx \text{Error}_{\text{RNN, train}} + \mathcal{O}(s - t_T).
     \end{equation}

\subsection*{Generalization Error Comparison}

- For the ODE model:
  \begin{equation}
  \text{Error}_{\text{ODE, target}} \approx \mathcal{O}\left( \sum_{i=1}^{T-1} (\Delta t_i)^2 \right) + \mathcal{O}((s - t_T)^3).
  \end{equation}

- For the RNN model:
  \begin{equation}
  \text{Error}_{\text{RNN, target}} \approx \mathcal{O}\left( \sum_{i=1}^{T-1} \Delta t_i \right) + \mathcal{O}(s - t_T).
  \end{equation}

- Since \( (\Delta t_i)^2 \) is much smaller than \( \Delta t_i \) for any \( \Delta t_i \), and \( (s - t_T)^3 \) is much smaller than \( s - t_T \), we have:
  \begin{equation}
  \text{Error}_{\text{ODE, target}} \leq \text{Error}_{\text{RNN, target}}.
  \end{equation}

\end{proof}

\clearpage

\end{document}